\documentclass[10pt,twocolumn,letterpaper]{article}

\usepackage[pagenumbers]{cvpr} 

\usepackage{graphicx}
\usepackage{amsmath}
\usepackage{amssymb}
\usepackage{booktabs}

\newcommand{\wangcan}[1]{{\color{black} #1}}

\usepackage[pagebackref=true,breaklinks=true,colorlinks=true,bookmarks=false,linkcolor={red!50!black},urlcolor={darkgray!80!black},citecolor={green!50!black}]{hyperref} 

\usepackage[capitalize]{cleveref}
\crefname{section}{Sec.}{Secs.}
\Crefname{section}{Section}{Sections}
\Crefname{table}{Table}{Tables}
\crefname{table}{Tab.}{Tabs.}

\usepackage{xcolor}
\usepackage{enumitem}
\usepackage{xspace}
\usepackage{booktabs}
\usepackage{colortbl}
\usepackage{multirow}
\usepackage{makecell}
\usepackage{lipsum} 
\usepackage{gensymb} 
\usepackage{algorithm}
\usepackage{algpseudocode}

\newtheorem{theorem}{Theorem}[section]

\newtheorem{proposition}[theorem]{Proposition}
\newenvironment{proof}{\paragraph{Proof:}}{\hfill$\square$}
\usepackage{titletoc}

\usepackage[labelsep=period]{caption}
\renewcommand{\paragraph}[1]{\vspace{0.2em}\noindent \textbf{#1 \hspace{0.2em}}}

\usepackage{amsmath,amsfonts,bm}

\def\eqref#1{equation~\ref{#1}}

\def\1{\bm{1}}

\def\rmI{{\mathbf{I}}}

\def\vp{{\bm{p}}}

\def\vu{{\bm{u}}}
\def\vv{{\bm{v}}}
\def\vw{{\bm{w}}}
\def\vx{{\bm{x}}}
\def\vy{{\bm{y}}}

\DeclareMathAlphabet{\mathsfit}{\encodingdefault}{\sfdefault}{m}{sl}
\SetMathAlphabet{\mathsfit}{bold}{\encodingdefault}{\sfdefault}{bx}{n}

\newcommand{\x}{{\boldsymbol x}}

\usepackage{capt-of}
\usepackage{diagbox}

\definecolor{C0}{rgb}{0.121569, 0.466667, 0.705882}
\definecolor{C1}{rgb}{1.000000, 0.498039, 0.054902}
\definecolor{C2}{rgb}{0.172549, 0.627451, 0.172549}
\definecolor{C3}{rgb}{0.839216, 0.152941, 0.156863}
\definecolor{C4}{rgb}{0.580392, 0.403922, 0.741176}
\definecolor{C5}{rgb}{0.549020, 0.337255, 0.294118}
\definecolor{C6}{rgb}{0.890196, 0.466667, 0.760784}
\definecolor{C7}{rgb}{0.498039, 0.498039, 0.498039}
\definecolor{C8}{rgb}{0.737255, 0.741176, 0.133333}
\definecolor{C9}{rgb}{0.090196, 0.745098, 0.811765}
\definecolor{trolleygrey}{rgb}{0.5, 0.5, 0.5}

\definecolor{BrickRed}{rgb}{0.6,0,0}
\definecolor{RoyalBlue}{rgb}{0,0,0.8}
\definecolor{Tdgreen}{rgb}{0,0.4,0.7}
\definecolor{pinegreen}{rgb}{0.0, 0.47, 0.44}
\definecolor{cornellred}{rgb}{0.7, 0.11, 0.11}
\definecolor{cadmiumgreen}{rgb}{0.0, 0.42, 0.24}
\definecolor{spirodiscoball}{rgb}{0.06, 0.75, 0.99}
\definecolor{mylightblue}{rgb}{0.85, 0.90, 0.94}
\definecolor{maroon}{cmyk}{0,0.87,0.68,0.32}

\usepackage{pifont}
\definecolor{cfg}{rgb}{0.906, 0.435, 0.318}
\definecolor{cfgpp}{rgb}{0.165, 0.616, 0.561}
\definecolor{cfgnull}{rgb}{0.208, 0.565, 0.953}

\newcommand{\todo}[1]{{\color{black}#1}}

\definecolor{Gray}{gray}{0.9}

\definecolor{MyDarkRed}{rgb}{0.46, 0.16, 0.16}
\definecolor{MyDarkBlue}{rgb}{0.16, 0.16, 0.66}

\newcommand{\MethodName}{\textsc{Ani3DHuman}\xspace}

\def\paperID{} 
\def\confName{CVPR}
\def\confYear{2026}

\begin{document}

\title{\MethodName: Photorealistic 3D Human Animation with Self-guided Stochastic Sampling}

\author{
    Qi Sun$^{1}$ \quad 
    Can Wang$^{1}$ \quad 
    Jiaxiang Shang \quad 
    Yingchun Liu \quad 
    Jing Liao$^{1}$\thanks{Corresponding author} \\
    $^{1}$City University of Hong Kong
}

\newcommand\blfootnote[1]{%
  \begingroup
  \renewcommand\thefootnote{}\footnote{#1}%
  \addtocounter{footnote}{-1}%
  \endgroup
}

\twocolumn[{%
    \renewcommand\twocolumn[1][]{#1}%
    \maketitle
    \begin{center}
        \centering
        \includegraphics[width=\linewidth]{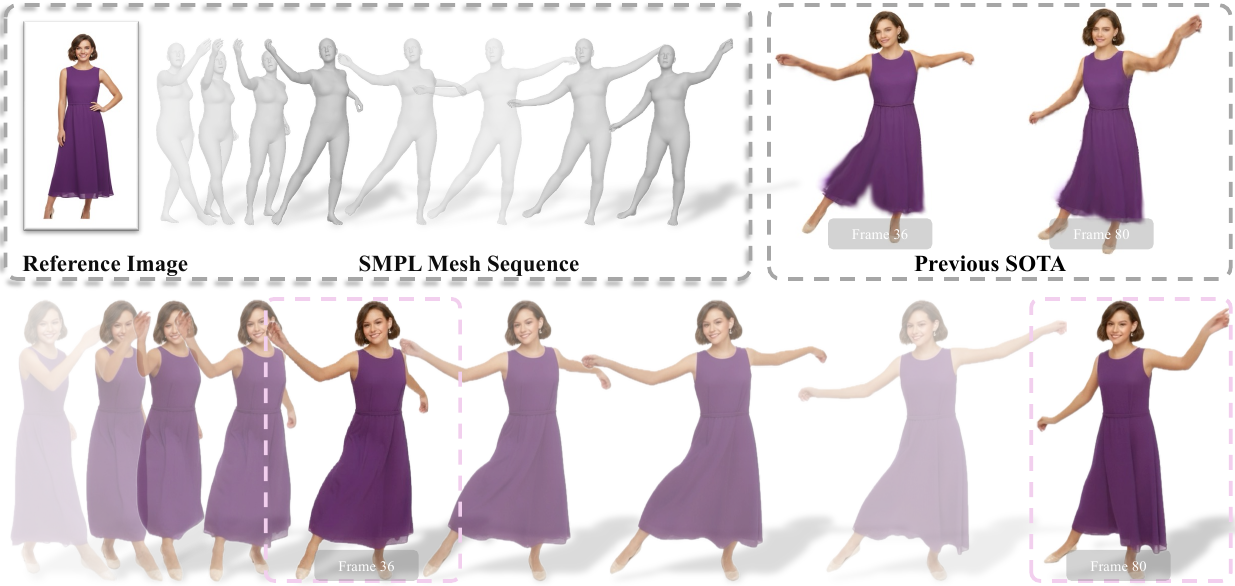}
        \captionof{figure}{
        Given a reference human image and a target SMPL mesh sequence, our method synthesizes photorealistic 3D human animation. Unlike the previous state-of-the-art (SOTA) methods (e.g., LHM~\cite{qiu2025LHM} \textbf{(top-right))} that are limited to rigid motion, our \MethodName \textbf{(bottom)} can further generate high-fidelity nonrigid dynamics, capturing the natural flow of the dress.
        }
        \label{fig:teaser}
    \end{center}
    \vspace{1.5em}
}]

\blfootnote{$^*$Corresponding author.}

\vspace{-4mm}
\begin{abstract}
  Current 3D human animation methods struggle to achieve photorealism: kinematics-based approaches lack non-rigid dynamics (e.g., clothing dynamics), while methods that leverage video diffusion priors can synthesize non-rigid motion but suffer from quality artifacts and identity loss. To overcome these limitations, we present \MethodName, a framework that marries kinematics-based animation with video diffusion priors. We first introduce a layered motion representation that disentangles rigid motion from residual non-rigid motion. Rigid motion is generated by a kinematic method, which then produces a coarse rendering to guide the video diffusion model in generating video sequences that restore the residual non-rigid motion. However, this restoration task, based on diffusion sampling, is highly challenging, as the initial renderings are out-of-distribution, causing standard deterministic ODE samplers to fail. Therefore, we propose a novel self-guided stochastic sampling method, which effectively addresses the out-of-distribution problem by combining stochastic sampling (for photorealistic quality) with self-guidance (for identity fidelity). These restored videos provide high-quality supervision, enabling the optimization of the residual non-rigid motion field. Extensive experiments demonstrate that \MethodName can generate photorealistic 3D human animation, outperforming existing methods. Code is available in \href{https://github.com/qiisun/ani3dhuman}{https://github.com/qiisun/ani3dhuman}.
\end{abstract}

\section{Introduction}
\label{sec:intro}

The importance of 3D digital humans has been growing across various applications, including AR/VR~\cite{genay_tvcg}, gaming~\cite{cai2024playing}, education~\cite{lakhfif2020design}, and healthcare~\cite{korban2022survey}. 
This has motivated numerous research efforts aimed at automatically animating 3D humans~\cite{qiu2025LHM, sim2025persona, moon2024exavatar}.

In traditional 3D human animation, researchers drive the motion with either kinematics-based methods, such as skeletons~\cite{sim2025persona, moon2024exavatar} and SMPL meshes~\cite{qiu2025LHM}, or physics-based methods~\cite{PhysAavatar24, grigorev2023hood}. 
Kinematics-based methods offer a controllable way to describe rigid human movement, but are challenging to model non-rigid deformations, such as soft body movements, clothing, and hair, which involve complex, flexible changes in shape and structure.  
Physics-based methods~\cite{PhysAavatar24, grigorev2023hood, xiang2022dressing} focus on modeling the complex dynamic effects of clothing interacting with human bodies. 
While effective at generating natural and realistic non-rigid garment animations, these methods require high computational resources and involve significant complexity in specifying physical models and numerous physical parameters.

Recent advancements in video diffusion models~\cite{videoworldsimulators2024, wan2025} offer a compelling alternative, inherently modeling both rigid and non-rigid dynamics without physical simulation. 
\todo{Using score distillation sampling~\cite{poole2022dreamfusion, singer2023text, bah20244dfy} to distill motion suffers from unsatisfactory results like over-saturation. 
Another approach first uses a diffusion model to generate videos, and then directly reconstructs 3D animation from them.
However, this pipeline suffers from distinct failure modes originating from video generation:
1) When relying on multi-view diffusion models~\cite{xie2024sv4d, liang2024diffusion4d, gao2025charactershot}, the reconstruction quality is limited by the scarcity of 4D training data, which leads to low-quality video generation.
2) When using pose-driven 2D video models, such as PERSONA~\cite{sim2025persona}, the generated videos suffer from identity loss, as the model hallucinates a different appearance for each video.}

To address these issues, we propose \MethodName, a framework that marries kinematics-based animation with 2D video diffusion priors. 
We first design a layered motion representation that adopts mesh rigging as a strong motion prior, augmented by a deformation field~\cite{cao2023hexplane, Wu_2024_CVPR} for modeling non-rigid motion.
In contrast to direct reconstruction methods~\cite{sim2025persona, xie2024sv4d, yao2024sv4d2}, our key insight is to leverage kinematics as a strong structural and identity prior. 
Specifically, we first render a coarse video from our mesh-rigged animation, which provides a powerful, view-consistent constraint on the human's identity and structure that previous methods lack.
We then use a pretrained video diffusion model to restore this rendering, tasking it to synthesize realistic non-rigid dynamics (\emph{e.g.}, clothing flow) onto the existing structure rather than inventing an identity. 
These restored videos provide high-fidelity photorealistic supervision to optimize the residual motion field.

The restoration, however, is highly non-trivial. 
The initial renderings are unrealistic and thus out-of-distribution (OOD) for the pretrained video model.
\todo{Framing this restoration as a diffusion sampling task}, we find that standard deterministic ODE sampling methods fail on this OOD data, producing unsatisfying results (detailed in \cref{fig:mismatch}). Therefore, our core technical contribution is self-guided stochastic sampling, a novel restoration method designed for this task. 
Motivated by the robustness of stochastic sampling~\cite{Karras2022edm, ma2024sit} in correcting OOD samples, we develop a stochastic counterpart for deterministic flow matching. 
To solve the identity loss that occurs during this high-noise restoration process, we further introduce a self-guidance mechanism. Inspired by posterior sampling (DPS)~\cite{chung2023diffusion}, this guidance modifies the sampling process to ensure the posterior mean remains faithful to the input in preserved regions.

Finally, to robustly use these high-fidelity restored videos for 4D optimization, we must account for the inherent inconsistency across multiple samples. We employ diagonal view-time sampling as an efficient strategy to provide a coherent optimization signal by minimizing the number of generative trajectories, enabling sharp reconstruction.

\todo{In summary, \MethodName achieves photorealistic human animation results with the novel self-guided stochastic sampling algorithm, generates high-fidelity photorealistic non-rigid dynamics, capturing the natural flow of the dress (as shown in \cref{fig:teaser}), significantly surpassing the state-of-the-art methods.}
We provide extensive analysis and ablations that validate the critical roles of both stochasticity (for photorealistic quality) and self-guidance (for identity fidelity) in our sampler. 
These experiments demonstrate that sampling method is an essential and effective technique for restoring OOD renderings into high-quality, identity-preserving videos for 4D supervision.

\begin{figure*}
    \centering
    \includegraphics[width=\linewidth]{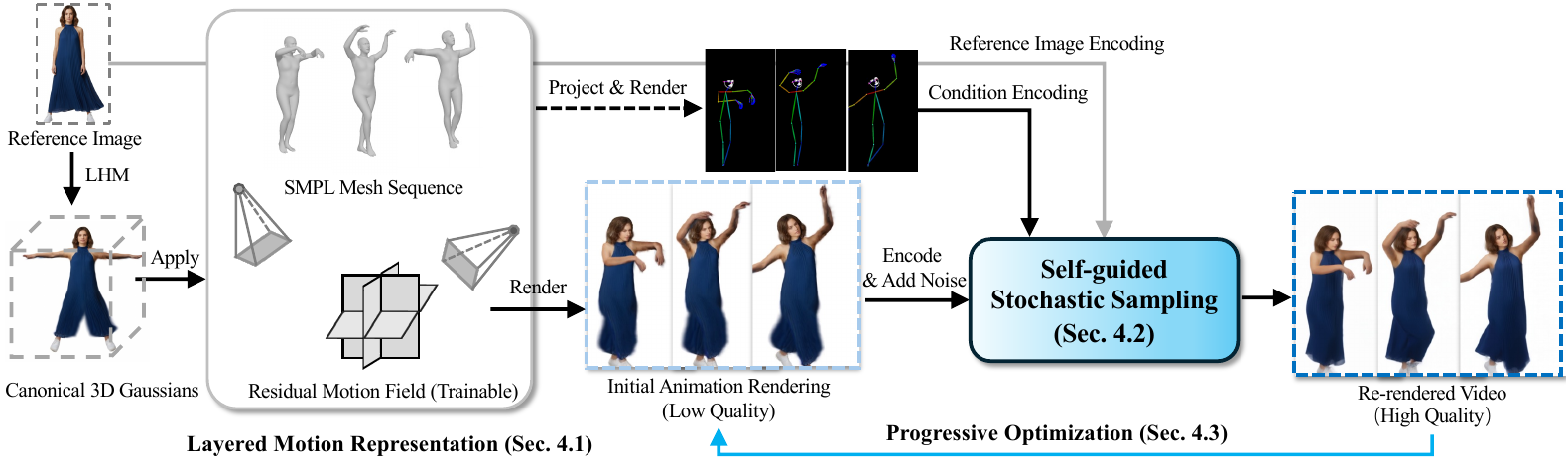}
    \caption{\textbf{Pipeline overview.}
    Our \MethodName animates a 3D Gaussian $\mathcal G$ (reconstructed with LHM~\cite{qiu2025LHM} from the reference image) with a mesh sequence.
    Our layered motion combines a mesh-rigged motion with a residual field for non-rigid dynamics.
    A coarse rendering $\vy$ from the rigid motion is restored to a high-quality video $\vx^*$ using our self-guided stochastic sampling.
    This restored video $\vx^*$ then provides supervision to progressively optimize the residual motion field.
    }
    \label{fig:framework}
    \vspace{-4mm}
\end{figure*}

\section{Related Work}
\label{sec:related_works}

\subsection{Traditional 3D Human Animation}
\paragraph{Kinematics-based methods.}
Kinematics-based methods~\cite{rekik2022} efficiently driving character motion by controlling skeletal poses.
Among these, Linear Blend Skinning (LBS)~\cite{lbs} is a core and widely used technique, deforming the surface mesh through a weighted average of bone transformations.
This paradigm was significantly advanced by parametric models like SMPL~\cite{SMPL:2015, SMPL-X:2019}, which extends LBS with identity-driven shape variations and pose-dependent shape variations.
Such models are pivotal for tasks like motion retargeting, adapting existing motions to new characters.
This reliance on an explicit, mesh-driven structure continues in current human implicit field and 3DGS methods~\cite{hugs, qian20243dgs, qiu2025LHM, liu2023humangaussian, jiang2023avatarcraft, ARAH:ECCV:2022}, which typically achieve animation via their corresponding meshes.
While the rendering quality of explicit meshes may lag behind modern video generation, this mesh-based approach provides a significant (rigid) motion prior. We leverage this by incorporating the mesh-rigged motion into our layered motion.

\paragraph{Physics-based animation.}
Another line of research~\cite{su2023caphy, PhysAavatar24, grigorev2023hood} uses physics simulation to enhance visual realism, particularly for modeling the complex dynamic effects of clothing in interaction with human bodies. 
These methods often require modeling the garment as a separate mesh to simulate its physical interactions with the body. 
For instance, PhysAvatar~\cite{PhysAavatar24} adopts the Codimensional Incremental Potential Contact (C-IPC) solver~\cite{Li2021CIPC}, which uses a log-barrier penalty function for its robustness in handling complicated body–cloth collisions.
However, this pipeline demands heavy runtime cost and extensive preprocessing, including the creation of separate garment meshes and the meticulous tuning of numerous physical parameters (\emph{e.g.}, stiffness, damping). 
To avoid this modeling and simulation complexity, we use a general motion field to represent the complex non-rigid deformation, and video diffusion prior for effective supervision.

\subsection{Video Diffusion Prior for 3D Animation}
\paragraph{Score distillation sampling (SDS).}
Recent advances in video diffusion models~\cite{wan2025, videoworldsimulators2024, HaCohen2024LTXVideo, kong2024hunyuanvideo, hong2022cogvideo, yang2024cogvideox, he2022lvdm, xing2023dynamicrafter, chen2023videocrafter1, chen2024videocrafter2} have inspired research on distilling 4D dynamic scenes from pre-trained models. MAV3D~\cite{singer2023text} was an early text-to-dynamic object work using a hexplane representation. 
Following methods leverage 3D Gaussian Splatting for high-fidelity rendering~\cite{li2024dreammesh4d, ling2023align, wimmer2025gaussianstolife, bahmani2024tc4d, li2025akd, sun2025animus3d}.
Methods like DG4D~\cite{ren2023dreamgaussian4d} and Disco4D~\cite{pang2025disco4d} use single-view videos as supervision alongside SDS with 3D-aware image diffusion to enhance unseen views.
However, SDS is known to suffer from lengthy optimization times~\cite{tang2023dreamgaussian}, unstable training dynamics~\cite{hertz2023delta}, and unsatisfactory generation quality, such as oversmoothing or over-saturation~\cite{mcallister2024rethinking}.

\paragraph{Photometric reconstruction with generated videos.} 
Another line of research in 3D animation is photometric reconstruction from generated videos, optimizing a generic 4D representation without cumbersome SDS objective.
This strategy primarily follows two paths.
The first employs multi-view video diffusion models~\cite{xie2024sv4d, yao2024sv4d2, li2024vividzoo, zhang20244diffusion, liang2024diffusion4d, wang20254real, sun2024eg4d, wu2025cat4d, shao2024human4dit, jiang2024animate3d}, which are fine-tuned to synchronize video generation across views.
To address human animation specifically, CharacterShot~\cite{gao2025charactershot} enhances image-to-video diffusion transformer~\cite{yang2024cogvideox} with 2D pose conditions and extends to multi-view setting.
While conceptually sound, these models are fundamentally limited by the scarcity of high-quality 4D training data, and their generation quality lags significantly behind general 2D video models.
The second strategy, used by PERSONA~\cite{sim2025persona}, employs pose-controlled 2D video diffusion~\cite{zhang2025mimicmotion} to create training data.
It then uses these generated videos to jointly optimize canonical 3D Gaussians and pose-dependent non-rigid deformation field, finally adopting LBS for animation.
While this leverages high-quality 2D priors, it suffers from severe identity loss and temporal shifts, as the model hallucinates a different appearance for each generated video.
In contrast to these direct reconstruction, our method restores initial mesh-rigged renderings, which provides a strong structural and identity prior.
And our self-guided stochastic sampling ensures both photorealistic quality and strong fidelity.

\section{Preliminary: Flow Matching}
Building upon the success of denoising diffusion models~\cite{ho2020denoising, song2021scorebased}, Flow Matching (FM)~\cite{liu2022flow, lipman2022flow} generates samples by learning a velocity field $\vv_\theta$ that transports a prior distribution $p_1$ to the data distribution $p_0$. Rectified Flow~\cite{liu2022flow}, a notable variant, simplifies this by defining a linear interpolation path: $\vx_t = (1-\sigma_t)\vx_0 + \sigma_t \vx_1$, where $\vx_0 \sim p(\mathbf{x})$ is a data sample and $\vx_1 \sim \mathcal{N}(0, 1)$ is a noise sample.
The corresponding target velocity field is constant, $\vu_t = \vx_1 - \vx_0$. The model $\vv_\theta$ is trained to predict this velocity with the objective:
\begin{equation}
\min_\theta \mathbb E_{t, \vx_0, \vx_1} ||\vv_\theta (\vx_t, t) - (\vx_1 - \vx_0) ||_2^2.
\label{eq:fm_objective}
\end{equation}
The sampling process starts from $\vx_1 \sim \mathcal{N}(0, 1)$ and integrates the learned field $\vv_\theta$ backward using an ODE solver:
\begin{equation}
{d \vx_t} = \vv_\theta(\vx_t, t) dt, \quad \text{solved from } t=1 \text{ to } t=0.
\label{eq:pfode}
\end{equation}
A key property, analogous to the Tweedie's formula~\cite{Tweedie}, is the ability to predict the path's endpoints ($\vx_0$ and $\vx_1$) from any intermediate point $\vx_t$. By rearranging the path definition and substituting $\vv_\theta \approx \vu_t$, we can derive estimators for both the posterior mean ($\hat{\vx}_0$, the predicted data) and the posterior noise ($\hat{\vx}_1$):
\begin{align}
\vu_t = \frac{\vx_t - \vx_0}{\sigma_t} &\implies \hat{\vx}_{0|t} = \vx_t - \sigma_t \vv_\theta(\vx_t, t); \label{eq:posterior_mean} \\
\vu_t = \frac{\vx_1 - \vx_t}{1-\sigma_t} &\implies \hat{\vx}_{1|t} = \vx_t + (1-\sigma_t) \vv_\theta(\vx_t, t). \label{eq:posterior_noise}
\end{align}
A standard deterministic (ODE) solver uses these two predictions to perform an update step (from $t$ to $t_\text{next}$) by re-interpolating on the linear trajectory:
\begin{equation}
\vx_{t_\text{next}} = (1-\sigma_{t_\text{next}})\hat{\vx}_{0|t} + \sigma_{t_\text{next}}\hat{\vx}_{1|t}.
\label{eq:ode_update}
\end{equation}

\section{Proposed Method}
\label{sec:method}

We present the pipeline of our framework in \cref{fig:framework}. Given a 3D human $\mathcal{G}$ represented by 3DGS~\cite{kerbl3Dgaussians} (reconstructed from reference image) and a mesh sequence represented by the SMPL parameter sequence $\{s_t\}_{t=1}^N$,our goal is to animate the 3D human by modeling both rigid and non-rigid motions, such as body pose and cloth deformations, and to render a photorealistic video from any viewpoint. This is achieved by first defining a layered motion representation (\textbf{Sec. 4.1}) and then proposing a self-guided stochastic sampling approach (\textbf{Sec. 4.2}) to generate high-quality video supervision signals for progressive optimization (\textbf{Sec. 4.3}) to learn the motion field.

\subsection{Layered Motion Representation}

Our human motion representation combines an explicit mesh-rigged motion with an implicit residual motion field.

\paragraph{Mesh-rigged motion.} 
We first model the rigid motion using a mesh-rigged approach based on SMPL~\cite{SMPL:2015, SMPL-X:2019}. 
SMPL describes articulated motion via a sparse set of skeleton parameters $\{s_\tau\}_{\tau=1}^T$.
To drive our 3DGS sequence, we establish a bijective correspondence between them and a point cloud $\{p_i\}_{i=1}^N$ on the surface of the canonical SMPL-X mesh~\cite{SMPL-X:2019}. 
This mapping is based on spatial relations (\emph{e.g.}, Euclidean distance or SDF).
At each time step $\tau$, the skeleton parameters $s_\tau$ determine the translation and rotation of each SMPL point $p_i$. We then apply these exact transformations to the corresponding Gaussian $\mathcal{G}_i$, thus animating the rigid motion of the 3D Gaussians.

\paragraph{Residual motion field.}
Since the mesh-rigged motion field cannot capture non-rigid motions, we incorporate a residual motion field to model the non-rigid motion. This implicit function is parameterized with a Hexplane~\cite{cao2023hexplane, Wu_2024_CVPR}, 
which first queries feature $f_p$ in the canonical Gaussian position $\vp$.
Once the feature is obtained, a lightweight decoder implemented with an MLP predicts the offset $\Delta \theta$ of the Gaussian parameters $\theta$, such as position and rotation.

\begin{figure}
    \centering
    \includegraphics[width=\linewidth]{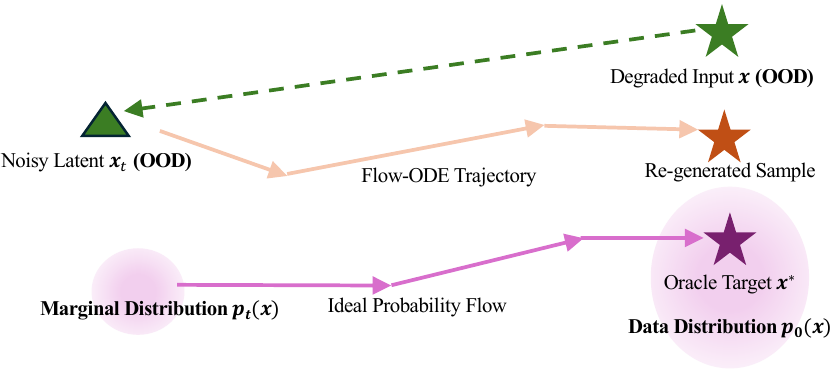}
    \caption{\textbf{Distribution mismatch in deterministic flow matching.}
    Our degraded input $\vy$ (out-of-distribution, OOD) creates a noisy latent $\vx_t$ that is off the marginal distribution $p_t(\vx)$.
    A deterministic Flow-ODE (\textcolor{orange}{orange path}) follows an incorrect trajectory as its velocity predictions are inaccurate for OOD samples, resulting in a low-quality sample.
    This motivates our use of an SDE sampler, which can actively correct the path by driving the sample back toward the marginal distribution.
    }
    \label{fig:mismatch}
    \vspace{-4mm}
\end{figure}

\subsection{Video Re-rendering with Self-guided Stochastic Flow Sampling}
\label{subsec:sampling}

Our goal is to generate a high-quality, identity-preserving video $\vx^*$ to provide effective supervision for the residual motion field.
However, the initial video $\vy$, rendered by 3DGS from the mesh-rigged motion, is highly unrealistic, exhibiting artifacts such as missing (garment) regions and unstable fine-grained motion. 
Therefore, we propose self-guided stochastic sampling to restore this coarse input $\vy$ into the high-fidelity target $\vx^*$.

\paragraph{Limitation of deterministic flow-ODE.}
We formulate the re-rendering as a video-conditioned sampling problem.
Following SDEdit~\cite{meng2022sdedit}, we first inject significant Gaussian noise into the input $\vy$ to a level $t$:
\begin{equation}\vx_{t} = \sigma_{t} \epsilon + (1-\sigma_{t})\vy, \quad \epsilon \sim \mathcal N(0,1). \label{eq:noise_inject}\end{equation}
A baseline approach would be to use a deterministic ODE solver (Flow-ODE)~\cite{liu2022flow} to reverse this process from $t$ to $0$ (see \cref{eq:pfode}). However, this approach fails. As shown in~\cref{fig:mismatch}, since the input $\vy$ is OOD, its noised version $\vx_{t}$ also lies off the marginal distribution $p_t(\vx)$ that the flow model was trained on. A deterministic ODE trajectory has no mechanism to correct this error; it is ``off the rails" and will follow an incorrect path, leading to low-quality results.

\paragraph{High-quality generation with stochastic sampling.}
Stochastic SDE sampling typically yields higher generation quality \cite{Karras2022edm, song2021scorebased, nie2023blessing, singh2024stochastic} compared to ODE sampling. 
More importantly, EDM \cite{Karras2022edm} proves that the stochastic process actively pulls samples toward the target marginal $p_t(\vx)$ at each step, correcting errors accumulated from the initial OOD state.
Motivated by this, we propose a reverse-time SDE analogous to the deterministic flow ODE in \cref{eq:pfode}:
\begin{equation}
    d \vx = \vv_t (\vx_t, t)dt + g(t) d\vw_t, \label{eq:sde}
\end{equation}
where $g(t)d\vw_t$ is the stochastic ``diffusion" term that performs the correction.
To implement this, we propose a novel stochastic discretization: we add noise directly to the ``clean noise" prediction $\hat{\vx}_{1|t}$ before interpolation: \begin{equation}
    \hat \vx_{1|t} \leftarrow \sqrt{\gamma (t)} \epsilon + \sqrt{1-\gamma (t)} \hat \vx_{1|t},
    \label{eq:renoise}
\end{equation}
where $\gamma(t)$ is set to $\sigma_t$ empirically. 
This re-noising (Eq.~\ref{eq:renoise}) is our simple and effective implementation of the stochastic term $g(t)d\vw_t$, providing the necessary path correction, which is essential for achieving high-quality results from our OOD inputs.
Proof is detailed in the \emph{supplementary}.

\begin{algorithm}[t]
\caption{Self-guided Stochastic Sampling (Practical)}\label{alg:m}
\begin{algorithmic}[1]
\Require Low-quality video $\vy$; Pre-trained flow-based model $\vv_\theta$; Preserved region $\mathcal M$; Initial noise step $t_0$, constant step size $\lambda$;
\Ensure \wangcan{Desirable high-quality video $\vx^*$}
\State \textbf{Sample} $\epsilon \sim \mathcal{N}(0, \rmI)$
\State $\x_t = \sigma_{t_0}\epsilon + (1-\sigma_{t_0}) \vy$
\Comment{\cref{eq:noise_inject}}
\For{$t: t_0\rightarrow 0$} \Comment{Sampling loop}
    \State $\hat \vx_{0|t} \gets \vx_t - \sigma_t\vv_\theta(\vx_t, t)$ \Comment{\cref{eq:posterior_mean}}
    \State $\hat \vx_{1|t} \gets \vx_t + (1-\sigma_t)\vv_\theta(\vx_t, t)$ \Comment{\cref{eq:posterior_noise}}
    \State $\hat \vx_{0|t} \leftarrow \hat \vx_{0|t} - \lambda \nabla_{ \vx_{t}} ||\mathcal M \odot (\vy - \hat \vx_{0|t} )||^2$  \Comment{\cref{eq:dps}}
    \State \textbf{Sample} $\epsilon \sim \mathcal{N}(0, \rmI)$ 
    \State $\hat \vx_{1|t} \gets \sqrt{1-\sigma_{t}}\hat\vx_{1|t} + \sqrt{\sigma_{t}}\epsilon$ \Comment{\cref{eq:renoise}}
    \State $\vx_{t_\text{next}} \gets (1-\sigma_{t_\text{next}})\hat\vx_{0|t} + \sigma_{t_\text{next}} \hat \vx_{1|t}$ \Comment{\cref{eq:ode_update}}
\EndFor 
\State \textbf{Return} $\vx^*=\vx_t|_{t=0}$ 
\end{algorithmic}
\end{algorithm}


\paragraph{Identity preserving with self-guidance.}
The entire re-rendering process begins by injecting a high level of noise $t$ (Eq.~\ref{eq:noise_inject}) into the input $\vy$. This high noise, while necessary for the stochastic sampler to reset and find the correct data manifold, simultaneously destroys or corrupts identity-critical information from the original input.
Consequently, the unguided stochastic sampler, while producing a high-quality video, will fail to preserve the human's identity; it will hallucinate a plausible but incorrect appearance (\emph{e.g.}, a different face) that is consistent with the corrupted noisy latent $\vx_{t}$.
Therefore, to ensure fidelity to the original input $\vy$, we must explicitly guide the sampling process. Theoretically, this conditioning is achieved by modifying the SDE's drift term (Eq.~\ref{eq:sde}) with the score of the posterior $p(\vy|\vx_t)$: 
\begin{equation} d \vx = [\vv_t (\vx_t, t)-g(t)^2 \nabla_{\vx_t} \log p (\vy|\vx_t)]dt + g(t) d\vw_t. \label{eq:guided_sde} 
\end{equation}
However, the guidance term $\nabla_{\vx_t} \log p (\vy|\vx_t)$ is intractable. We therefore adopt the core insight from Diffusion Posterior Sampling (DPS)~\cite{chung2023diffusion}, which provides an elegant and practical approximation. DPS proves that this complex score-space guidance (Eq.~\ref{eq:guided_sde}) can be effectively approximated by applying a simple data-space L2 loss to the posterior mean $\hat{\vx}_{0|t}$.
In each sampling step, we first compute the standard posterior mean $\hat{\vx}_{0|t}$ (Eq.~\ref{eq:posterior_mean}). Then, we apply a guidance step to pull this prediction closer to our masked input $\vy$: 
\begin{equation} 
\hat \vx_{0|t} \leftarrow \hat \vx_{0|t} - \lambda (t) \nabla_{x_t} ||\mathcal M \odot (\vy - \hat \vx_{0|t} )||^2, \label{eq:dps} 
\end{equation}
where $\mathcal M$ is a binary mask for preserved regions (e.g., face, hands) and $\lambda(t)$ is the step size.
This gradient has a simple closed-form solution (as derived in the supplementary), making this step computationally efficient.

Finally, we combine our stochastic component (Eq.~\ref{eq:renoise}) and our new guided component (Eq.~\ref{eq:dps}) into the standard update rule (Eq.~\ref{eq:ode_update}) to compute the next sample $\vx_{{t_\text{next}}}$, \todo{finally obtain $\vx^*$}.
This self-guided stochastic sampler thus achieves both high-quality generation (from the SDE) and strong identity preservation (from the guidance). The practical algorithm is summarized in \cref{alg:m}.

\begin{figure}[t]
    \centering
    \includegraphics[width=\linewidth]{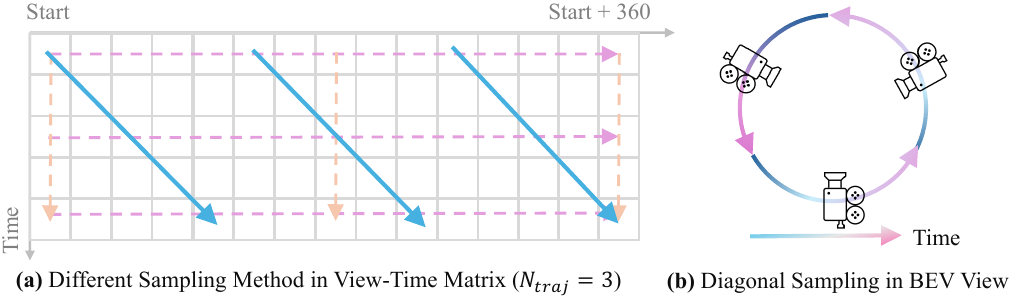}
    \caption{\textbf{Diagonal view-time sampling}. 
   (a) Illustration of \textcolor{cyan}{diagonal sampling} in a view-time matrix ($N_\text{traj}=3$). This method simultaneously evolves the camera view and time, distinct from fixed-time (\textcolor{magenta}{bullet-time}) or fixed-camera (\textcolor{orange}{independent-view}) sampling. (b) An example trajectory shows the camera orbiting 360° as time progresses.}
    \label{fig:prog}
    \vspace{-4mm}
\end{figure}

\begin{figure*}
    \centering
    \includegraphics[width=\linewidth]{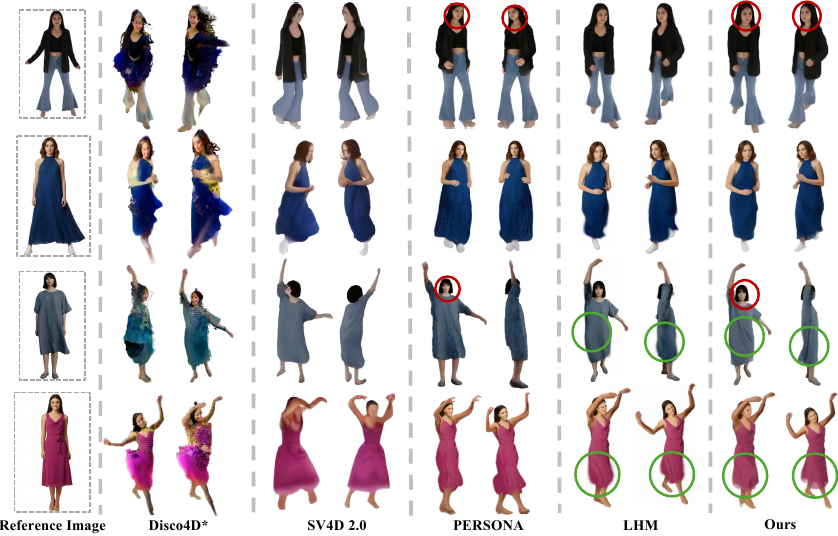}
\caption{\textbf{Comparison with state-of-the-art methods.} Our method (Ours) is the only one to simultaneously achieve high quality, identity preservation, and realistic non-rigid motion. Existing methods fail in key areas: Disco4D~\cite{pang2025disco4d} and SV4D 2.0~\cite{yao2024sv4d2} suffers from low quality (due to SDS and multi-view video diffusion);  PERSONA loses identity (due to direct reconstruction from pose-driven video diffusion); and LHM~\cite{qiu2025LHM} captures identity but fails to model clothing dynamics. (* self-implementation)}    \label{fig:sota}
    \vspace{-5mm}
\end{figure*}

\paragraph{Personalized diffusion prior.} 
While general video diffusion models are powerful, their priors may not be optimized for human animation. 
\wangcan{Therefore, we finetune a video model with two human-related conditions to provide personalized diffusion prior:}
(1) a reference human image via an additional branch, providing a strong prior for identity preservation (fidelity), and (2) a 2D pose sequence for precise motion control. By pre-training specifically on human-centric data, this specialized prior offers a more suitable foundation, enabling higher generation quality and photorealism compared to a general-purpose video model.

\subsection{Progressive 4D Optimization}

\paragraph{Diagonal view-time sampling.}
We use our high-quality restored videos to optimize the residual motion field. The primary challenge is the inter-trajectory inconsistency~\cite{liu2025free4d, wang2024vistadream,wu2025difix3d} of generative models. Standard aggregation of many independent-view~\cite{xie2024sv4d} or bullet-time~\cite{rozumnyi2025bulletgen} trajectories accumulates conflicting signals, leading to blurred artifacts. We therefore propose diagonal view-time sampling (sampling $v$ and $t$ simultaneously, \cref{fig:prog}), as it captures spatio-temporal information using the minimum number of trajectories, thus minimizing exposure to inconsistency.

\paragraph{Dataset update.}
To address the sparsity of this minimal set, we pair it with a progressive dataset update strategy~\cite{instructnerf2023}. Every $5k$ iterations, we generate new trajectories based on the current state of the 4D model and add them to the training set. This ``generation-optimization" cycle progressively densifies the supervision in a consistent manner, ensuring a high-fidelity 4D reconstruction.

\paragraph{Optimization objective.}
We adopt the commonly-used photometric loss: L1 loss, dSSIM loss and LPIPS loss to supervise the 4D representation.
In addition, to preserve the geometry of rigid part of motion, we design a regularization by calculating depth difference between the original depth of the optimized depth within the preserved region:
\begin{equation}
    \mathcal L = \mathcal L_{\text{L1}} + \lambda_1\mathcal L _{\text{LPIPS}} + \lambda_2 \mathcal L_{\text{dssim}} + \lambda_3 \mathcal L_{\text{mask}}  \\
    + \lambda_4 \mathcal L_{\text{reg}}.
\end{equation}

\begin{figure*}
    \centering
    \includegraphics[width=0.9\linewidth]{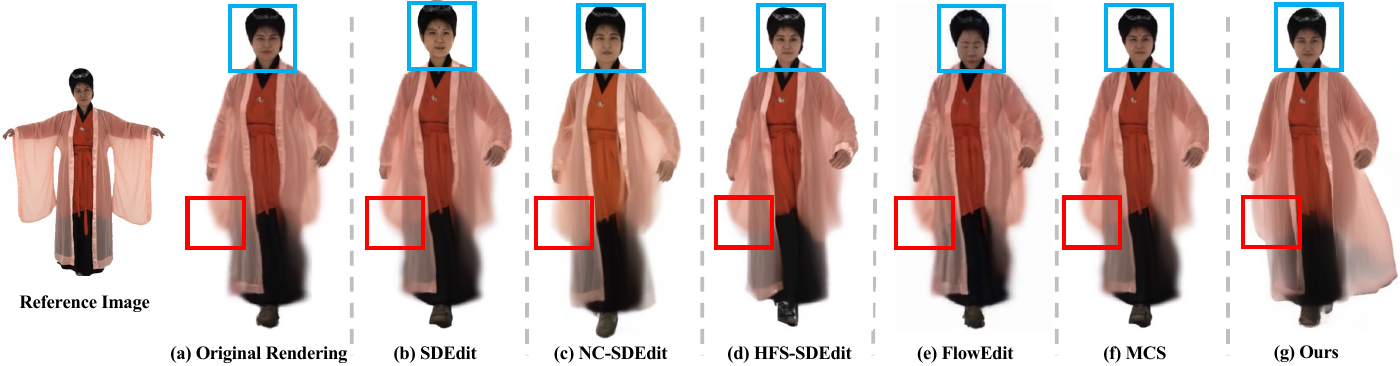}
    \caption{\textbf{Comparison on other video re-rendering methods.} (a) original rendering $\vx$; (b-f) competitive sampling methods; (g) our results $\vx^*$.
    Only our self-guided stochastic sampling can generate sharp details while preserving the original identity well.
    }  
    \vspace{-3mm}
    \label{fig:comp_rerender}
\end{figure*}
\begin{figure*}[ht]
    \centering
    \includegraphics[width=0.9\linewidth]{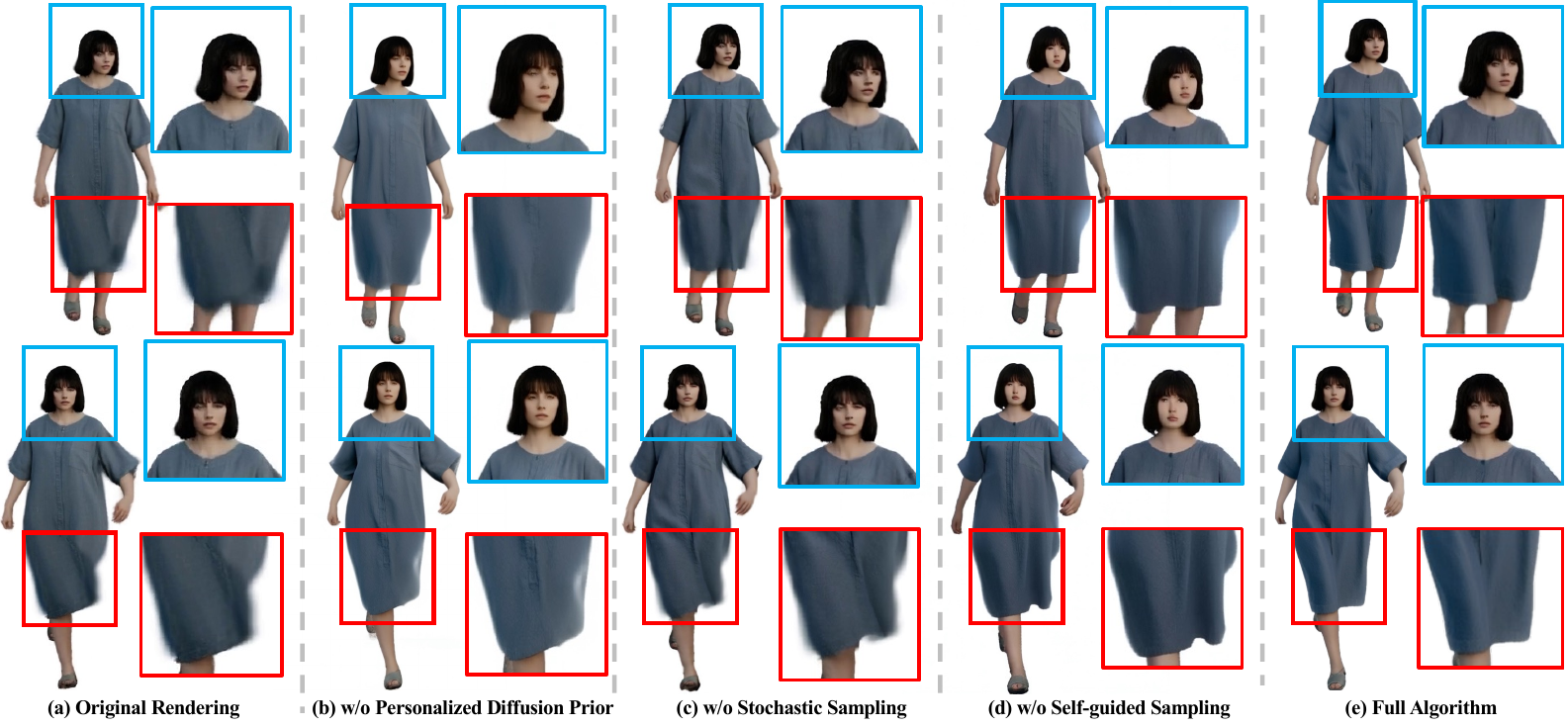}
    \caption{\textbf{Ablative experiment on self-guided stochastic sampling.} We compare full sampling method (e) and the generation results of our model with a set of ablations. (a) Original rendering with mesh-rigged animation; (b) replacing personalized diffusion prior with general diffusion prior introduces slight performance degradation and artifacts; (c) We observe that our method produces significant quality drop when removing stochastic sampling; (d) removing self-guided sampling greatly reduces the identity preservation.
    }
    \vspace{-4mm}
    \label{fig:abl_rerender}
\end{figure*}

\section{Experiments}
\label{sec:Experiments}
\subsection{Experimental settings}

\paragraph{Implementation details.}
For fair comparison, we use LHM~\cite{qiu2025LHM} to generate the canonical 3D human Gaussians from a single-view image.
The render resolution is set to $H/W/T$=832/480/81.
We finetune the personalized pose-conditioned video diffusion based Wan2.1-1.3B~\cite{wan2025}.
For key parameter in video re-rendering, we set noise injection rate $t_0=0.6$, initial denoising step $N=30$, $\lambda=0.2$ in self-guidance.
Preserved mask is obtained with SAM2~\cite{ravi2024sam2}.
All experiments are conducted on a NVIDIA A6000 48G GPU.
In progressive 4D optimization, we use AdamW~\cite{kingma2014adam} with constant learning of 1e-5 with $25k$ iterations.

\paragraph{Baselines.}
We compare our method with several state-of-the-art 3D animation methods: LHM~\cite{qiu2025LHM}, Disco4D~\cite{pang2025disco4d}, SV4D 2.0~\cite{yao2024sv4d2}, and PERSONA~\cite{sim2025persona}.
LHM reconstructs human Gaussian from single image, then animate the human Gaussian with mesh-rigged motion.
Disco4D uses hybrid supervision of single-view video and 3D-aware image SDS to optimize the motion field.
SV4D 2.0 first generates multi-view videos given single-view video then uses DynNeRF~\cite{Gao-ICCV-DynNeRF} for reconstruction.
PERSONA learns a pose-dependent non-rigid deformation (relative to canonical SMPL) from pose-conditioned video diffusion, then uses LBS to animate the Gaussian particles.

\paragraph{Evaluation tasks, dataset, and metrics.}
To evaluate the performance quantitatively, we select 10 cases from ActorsHQ~\cite{isik2023humanrf} dataset, and reconstruct from single-view image and extracted motion sequence. 
Then  we use common pixel-wise reconstruction metrics, PNSR, SSIM, LPIPS, and CLIP-Image~\cite{radford2021learning} to measure the similarity to the ground truth, and adopt FID/FVD to evaluate the general rendered image/video quality.
For novel motion (no GT), we conduct a user study to evaluate the quality in identity preservation, frame quality, motion realism, physical plausibility in non-rigid part and overall preference.
\begin{table}[t]
    \centering
    \vspace{-1mm}
        \caption{\textbf{Quantitative results with the state-of-the-art methods of human animation in ActorsHQ~\cite{isik2023humanrf} dataset.}}
         \resizebox{0.8\linewidth}{!}{
    \begin{tabular}{lccccccc}
    \toprule 
        Methods & PSNR $\uparrow$ & SSIM $\uparrow$ & LPIPS $\downarrow$ & CLIP-I $\uparrow$ & FID $\downarrow$  & FVD $\downarrow$ \\
        \midrule
        Disco4D~\cite{pang2025disco4d} & 12.05 & 0.5590 & 0.5019 & 0.6439 & 613.9 & 622.1\\
        SV4D 2.0~\cite{yao2024sv4d2} & 15.25 & 0.7708 & 0.3773 & 0.7640 & 364.9 & 478.7 \\
        PERSONA~\cite{sim2025persona} & 17.01 & 0.8219  & 0.2602 & 0.8779 & 199.1 &  367.0 \\ 
        LHM~\cite{qiu2025LHM} & 19.51 &\textbf{0.8382 } &  0.2169 & 0.9009 & 124.1 & 339.9 \\
        Ours & \textbf{20.08} & 0.8312 & \textbf{0.2125}  & \textbf{0.9160} & \textbf{105.3} & \textbf{295.2} \\
    \bottomrule
    \end{tabular}
    }
    \vspace{-4mm}
    \label{tab:sota_compa}
\end{table}

\begin{table}[t]
    \centering
    \caption{\textbf{User study} on human animation with novel motion.}
    \label{tab:user}
    \resizebox{0.9\linewidth}{!}{
    \begin{tabular}{lcccccc}
    \toprule
    \multicolumn{1}{c}{Methods ($\%$)} & \makecell[c]{Identity \\ Preservation} & \makecell[c]{Frame \\ Quality} & \makecell[c]{Motion \\ Realism} &  \makecell[c]{(Non-rigid) Physical \\ Plausibility} & \makecell[c]{Overall \\ Preference} \\
    \midrule
    Disco4D~\cite{pang2025disco4d} & 0 & 1.47 & 4.41 & 0 & 0 \\
    SV4D 2.0~\cite{yao2024sv4d2} & 0& 2.94 & 16.2 & 4.34 & 5.89 \\
    PERSONA~\cite{sim2025persona} & 20.6 & 30.9 & 14.7 & 19.1&  22.1\\
    LHM~\cite{qiu2025LHM} & 39.2& 29.4 & 29.4 & 14.7 & 17.6\\
    Ours & \textbf{40.1} & \textbf{35.3} & \textbf{35.3} & \textbf{61.8} & \textbf{54.4}\\
    \bottomrule
    \end{tabular}
    }
    \vspace{-5mm}
\end{table}

\subsection{Comparisons to the State-of-the-art Methods}
We compare our method and four the state-of-the-art methods in~\cref{fig:sota}.
In this comparison, we demonstrate 4 human with different motion sequence.
Disco4D suffers from over-saturation and notable artifacts due to SDS.
Directly reconstructing from multi-view diffusion videos (SV4D) cannot achieve high quality.
Although equipped with lots of preprocessing and regularization, PERSONA cannot preserve the original identity well.
While LHM that applies mesh-rigged motion that achieves high identity preservation and precise body control, but it cannot model the realistic non-rigid garment motions.
Only our method can generate photorealistic and physics-plausible non-rigid motion in various motions, such as clothing dynamics like {fluttering and folds} (also see \cref{fig:teaser}).
Quantitative results in \cref{tab:sota_compa} also support this, we have surpassed the state-of-the-art methods, with competitive reconstruction metrics and considerable 18.8 FID improvement.
User preference also shows that our method shares the best scores among all terms.

\subsection{Analysis of Self-guided Stochastic Sampling}

\paragraph{Comparison with other sampling methods.}
We compare our self-guided stochastic sampling with several competitive methods in~\cref{fig:comp_rerender}, including vanilla SDEdit-FM~\cite{meng2022sdedit}, FlowEdit~\cite{kulikov2024flowedit}, MCS~\cite{wang2024vistadream}, HFS-SDEdit~\cite{elevating3d}, and NC-SDEdit~\cite{yang2024noise}. For a fair comparison, all methods use the same base video model, an initial noise level of $t_0=0.6$, and 30 denoising steps.
Vanilla SDEdit \textbf{(b)} fails to preserve the human's identity.
To address this, following works in visual editing/restoration incorporate the input video $\vx$ to improve fidelity.
For example, MCS~\cite{wang2024vistadream} \textbf{(f)} updates the posterior mean  $\hat \vx_{0|t}$  with the weighted averaging of itself and the reference latents $\vx$.  
HFS-SDEdit~\cite{elevating3d} \textbf{(d)} matches the high-frequency component of posterior mean with that of the reference image $\vx$.
However, these methods~\textbf{(b-f)} are built on deterministic ODE sampling. As argued in~\cref{subsec:sampling}, ODE sampling struggles with the out-of-distribution (OOD) nature of our input, resulting in low-quality outputs. 
In contrast, our method \textbf{(g)} provides an effective solution that simultaneously achieves high-quality results (via stochastic sampling) and strong fidelity (via self-guidance), fully leveraging the power of the video diffusion prior. Implementation details are in the \emph{supplementary}.

\paragraph{Component-wise validation.}
We also validate the components of our method in~\cref{fig:abl_rerender}.
The original rendering \textbf{(a)} suffers from severe blur and unrealistic artifacts on the garment, caused by the initial mesh-rigged motion.
We ablate our two key contributions. First, in \textbf{(c)}, we replace our stochastic sampler with its deterministic ODE counterpart. The resulting video quality is significantly worse, and the blurriness persists, which confirms our hypothesis that stochastic sampling is essential for correcting the OOD input and achieving high-quality restoration. Second, in \textbf{(d)}, we remove the self-guidance term. While the video quality is high (due to stochastic sampling), the human's identity is lost. This demonstrates that our self-guidance is crucial for fidelity. Additionally, we replace our personalized diffusion prior with a general one \textbf{(b)}, which leads to a slight drop in realism. Our full method \textbf{(e)} is the only setting that successfully resolves the initial artifacts, generates a high-quality video, and faithfully preserves the human's identity.

\begin{figure}[t]
    \centering
    \includegraphics[width=\linewidth]{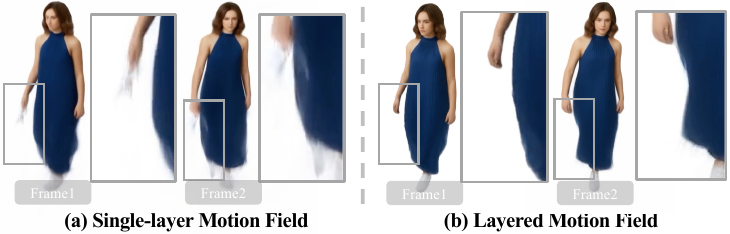}
    \caption{\textbf{Ablation study on motion field.}
    Our layered motion (right) captures intricate hand details, while the single-layer baseline (left) fails.}  
    \label{fig:abl_others}
    \vspace{-4mm}
\end{figure}

\begin{figure}[t]
    \centering
    \includegraphics[width=\linewidth]{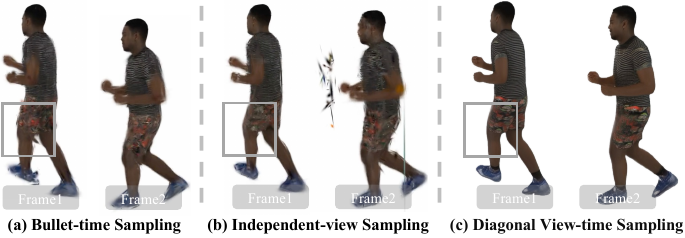}
    \caption{\textbf{Ablation study on sampling method.} Baseline methods (left) suffer from significant floaters and spikes, while our diagonal sampling (right) reconstructs sharp details.}  
    \label{fig:abl_others2}
    \vspace{-3mm}
\end{figure}

\subsection{\wangcan{Analysis of Motion Field and Data Sampling}}
We ablate our proposed methods: the motion representation and the view-time sampling strategy. 
Additional ablations on loss functions, progressive optimization, hyperparameter selection are detailed in the \emph{supplementary materials}.

\paragraph{Impact of layered motion representation.}
In \cref{fig:abl_others}, we compare our layered representation against single-layer motion field~\cite{liu2025free4d, yu20244real} initialized with mesh-rigged motion.
The baseline fails to model intricate transformations, such as human hands, whereas our layered approach captures these details effectively.

\paragraph{Impact of diagonal view-time sampling.}
In \cref{fig:abl_others2}, we compare our diagonal view-time sampling against bullet-time sampling and independent view sampling, using an equal trajectory count ($N_\text{traj}=3$). 
The baseline methods, suffering from temporal and spatial sparsity, produce significant floaters and spikes. In contrast, our strategy yields sharp, artifact-free details.

\section{Conclusion}
\label{sec:Conclusion}
We presented \MethodName, a novel framework for photorealistic human animation that successfully captures complex non-rigid motion.
First, we introduce a layered motion representation composed of a mesh-rigged motion and a residual field. 
Second, to supervise this, we propose self-guided stochastic sampling, which is specifically designed to transform our low-quality, out-of-distribution initial renderings into high-fidelity, identity-preserving videos. 
It achieves this by balancing stochasticity (for quality) with self-guidance (for fidelity).
We also introduce diagonal view-time sampling to ensure a coherent 4D optimization free from generative inconsistencies.
Comparative experiments show that our framework surpasses state-of-the-art methods, achieving best perceptual results. 
Our extensive ablation studies validate the effectiveness of our core sampling algorithm, as well as the necessity of the layered motion representation and diagonal sampling.
The key limitation lies in the lengthy sampling time of the video diffusion prior. 
A valuable future avenue is incorporating few-step generation techniques~\cite{huang2025selfforcing} to reduce the overall time cost.

{\small
\bibliographystyle{ieee_fullname}
\bibliography{ref}
}

\clearpage
\onecolumn
\appendix
\section*{Contents}
\startcontents[sections]
\printcontents[sections]{}{1}{}
\vspace{1cm}
\hrule
\vspace{1cm}
\section{Supplementary Video}

To better demonstrate the efficacy of our framework and the visual quality of our results, we provide a comprehensive supplementary video (overall length \texttt{2'47''}). 
We strongly recommend viewing the video to fully dynamic visual results.

\section{Proof}
\begin{proposition}[Error Bound of Gradient Approximation]
Consider the score approximation $\nabla_{\mathbf{x}_t} \log p(\mathbf{y} | \mathbf{x}_t) \approx \nabla_{\mathbf{x}_t} \log p(\mathbf{y} | \hat{\mathbf{x}}_{0|t})$ used in Eq.(10). Let $\mathcal{M}$ be the measurement operator and $\hat{\mathbf{x}}_{0|t} = \mathbb{E}[\mathbf{x}_0 | \mathbf{x}_t]$ be the posterior mean. Under the manifold constraint, the approximation error $\epsilon$ is upper bounded by:
\begin{equation}
    \epsilon \leq C \cdot \|\mathcal{M}\|_2 \cdot \mathbb{E}_{\mathbf{x}_0 \sim p(\mathbf{x}_0|\mathbf{x}_t)} [\|\mathbf{x}_0 - \hat{\mathbf{x}}_{0|t}\|],
\end{equation}
where $C$ is a constant related to the Lipschitz property of the noise schedule.
\end{proposition}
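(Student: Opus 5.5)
This is essentially the DPS Jensen-gap bound (Chung et al., Theorem 1), so the plan is to reproduce that argument in the present notation. First I will fix the likelihood model implicit in Eq.~(\ref{eq:dps}): $p(\vy|\vx_0)\propto \exp\bigl(-\|\vy-\mathcal M\vx_0\|^2/(2\sigma_y^2)\bigr)$, with $\mathcal M$ viewed as the linear masking operator. With this model, $p(\vy|\vx_t)=\int p(\vy|\vx_0)\,p(\vx_0|\vx_t)\,d\vx_0=\mathbb E_{\vx_0\sim p(\vx_0|\vx_t)}[f(\vx_0)]$, where $f(\vx_0):=p(\vy|\vx_0)$. The DPS surrogate is $f(\hat\vx_{0|t})=f(\mathbb E[\vx_0|\vx_t])$. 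I will define $\epsilon$ as the Jensen gap $\bigl|\mathbb E[f(\vx_0)]-f(\mathbb E[\vx_0])\bigr|$ between the true and approximated likelihoods. I will also note how this transfers to the scores: after taking logs and differentiating, the constant $C$ absorbs the lower bound of $p(\vy|\cdot)$ on the manifold and the Jacobian $\partial\hat\vx_{0|t}/\partial\vx_t$. By Eq.~(\ref{eq:posterior_mean}), that Jacobian equals $I-\sigma_t\nabla\vv_\theta$, which is bounded via the Lipschitz property of the schedule and of the velocity field.

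The key steps are as follows. (i) Show that $f$ is Lipschitz in $\vx_0$. We have $\nabla_{\vx_0} f = f(\vx_0)\,\mathcal M^\top(\vy-\mathcal M\vx_0)/\sigma_y^2$. The map $r\mapsto r e^{-r^2/(2\sigma_y^2)}$ is bounded by $\sigma_y e^{-1/2}$, so $\|\nabla f\|\le \|\mathcal M\|_2\cdot \frac{1}{\sqrt{2\pi}\sigma_y}\cdot \frac{e^{-1/2}}{\sigma_y}$ (per the normalization). This gives the Lipschitz constant $L_f = c\,\|\mathcal M\|_2$. (ii) Apply the elementary Jensen-gap estimate: $|\mathbb E f(\vx_0)-f(\mathbb E\vx_0)|\le \mathbb E|f(\vx_0)-f(\hat\vx_{0|t})|\le L_f\,\mathbb E\|\vx_0-\hat\vx_{0|t}\|$. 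This already has the claimed shape. (iii) Pass to the gradient/log level to match the statement's phrasing. Under the manifold constraint, both likelihoods are bounded below by some $\delta>0$ on the support of $p(\vx_0|\vx_t)$, so $|\log a-\log b|\le |a-b|/\delta$. Combining this with the bounded Jacobian from the schedule gives $C$.

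The main obstacle is step (iii), because the statement is about the gradient (score) while the Jensen argument naturally controls values. Strictly, a gradient bound needs the difference of derivatives, and that involves second-order terms, namely the covariance of $p(\vx_0|\vx_t)$ and derivatives of the posterior. My first attempt will be to interpret $\epsilon$ as the likelihood-level gap, as in DPS, and state explicitly that the gradient version follows only under the additional regularity assumptions packaged in the manifold constraint: a lower bound on the likelihood and Lipschitz $\vv_\theta$. These are what justify absorbing everything into $C$. If a genuine gradient bound is required, I will instead bound $\|\nabla_{\vx_t}\mathbb E f(\vx_0)-\nabla_{\vx_t} f(\hat\vx_{0|t})\|$ using Tweedie-type identities. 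For rectified flow, $\nabla_{\vx_t}\hat\vx_{0|t}$ is proportional to $\mathrm{Cov}(\vx_0|\vx_t)/\sigma_t^2$. I will then use Lipschitzness of $\nabla f$, which has constant proportional to $\|\mathcal M\|_2^2$, reducing this to $\mathbb E\|\vx_0-\hat\vx_{0|t}\|$ up to schedule-dependent constants.
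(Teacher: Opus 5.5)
Your plan matches the paper's. The paper's proof simply defers to the DPS Jensen-gap theorem, observes $\|\mathcal M\|_2=\max_i|M_{ii}|=1$ for a binary mask, and remarks that the posterior collapses as $t\to0$. Your steps (i)--(ii) actually carry out that likelihood-level argument, which is all the cited DPS result establishes, so reading $\epsilon$ as the value gap is the right choice. Do not lean on the gradient-level parts of step (iii), though: the Tweedie computation gives $\nabla_{\vx_t}\mathbb E[f(\vx_0)\,|\,\vx_t]-\nabla_{\vx_t}f(\hat\vx_{0|t})=\frac{1-\sigma_t}{\sigma_t^2}\,\mathbb E\bigl[(\vx_0-\hat\vx_{0|t})\bigl(f(\vx_0)-f(\hat\vx_{0|t})-\nabla f(\hat\vx_{0|t})^\top(\vx_0-\hat\vx_{0|t})\bigr)\bigr]$. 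That expression is controlled by second or third posterior moments, with a prefactor blowing up as $\sigma_t\to0$. It is not controlled by $C\,\mathbb E\|\vx_0-\hat\vx_{0|t}\|$ without extra assumptions such as bounded data support.
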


\begin{proof}
Following the theoretical framework in DPS~\cite{chung2023diffusion}, the likelihood gradient can be decomposed via the Tweedie’s formula. The spectral norm $\|\mathcal{M}\|_2$ represents the maximum amplification factor of the measurement operator. 

In our specific task, the operator $\mathcal{M}$ is defined as a binary mask $\mathbf{M} \in \{0, 1\}^n$. The spectral norm of a diagonal matrix (or masking operator) is given by its maximum singular value:
\begin{equation}
    \|\mathcal{M}\|_2 = \max_{i} |M_{ii}| = 1.
\end{equation}
Consequently, the error bound simplifies to $\epsilon \leq C \cdot \mathbb{E}[\|\mathbf{x}_0 - \hat{\mathbf{x}}_{0|t}\|]$. This term represents the uncertainty of the posterior estimation at time $t$. As the diffusion process approaches the clean data manifold ($t \to 0$), the posterior distribution $p(\mathbf{x}_0|\mathbf{x}_t)$ collapses to a Dirac delta distribution $\delta(\mathbf{x}_0 - \hat{\mathbf{x}}_{0})$, leading to $\epsilon \to 0$. This ensures that the approximate gradient converges to the true score direction in the final sampling stages.
\end{proof}

\begin{proposition}[SDE Correction Mechanism~\cite{Karras2022edm}]
The continuous implicit Langevin diffusion $d\vx_t = \frac{1}{2} \nabla \log p(\vx) dt + d\mathbf{w}_t$ actively corrects sampling errors by admitting the data marginal $p(\vx)$ as its unique stationary distribution.
\end{proposition}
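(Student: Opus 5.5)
The approach is to show, via the Fokker--Planck equation, that $p$ is stationary for the Langevin SDE $d\vx_t = \tfrac12\nabla\log p(\vx)\,dt + d\mathbf{w}_t$, and then to use a relative-entropy (Lyapunov) argument to show that $p$ is the unique stationary law and that every initial law is driven toward it. The second part is what makes ``actively corrects sampling errors'' precise.

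\textbf{Step 1: stationarity.} I will impose standing assumptions: $p>0$ and smooth on $\mathbb R^n$, and $\nabla\log p$ locally Lipschitz and confining, so that the SDE has a unique non-explosive strong solution. The law $q_s$ of $\vx_s$ then solves
\begin{equation}
\partial_s q_s = -\nabla\cdot\big(\tfrac12 q_s\nabla\log p\big) + \tfrac12\Delta q_s = \tfrac12\nabla\cdot\Big(q_s\nabla\log\frac{q_s}{p}\Big).
\end{equation}
Substituting $q_s=p$ makes $\nabla\log(q_s/p)=0$, so $p$ is stationary.

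\textbf{Step 2: uniqueness.} Let $q$ be any stationary density. Writing $h=q/p$, stationarity gives $\nabla\cdot(p\nabla h)=0$. I multiply by $h$ and integrate by parts to get $\int p\,|\nabla h|^2=0$. Hence $h$ is constant on the connected space $\mathbb R^n$, and normalization forces $q=p$.

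\textbf{Step 3: correction.} Set $\mathrm{KL}(s)=\int q_s\log(q_s/p)$. Differentiating along the Fokker--Planck flow and integrating by parts gives
\begin{equation}
\frac{d}{ds}\mathrm{KL}(s) = -\tfrac12\int q_s\Big\|\nabla\log\frac{q_s}{p}\Big\|^2 \le 0,
\end{equation}
with equality only when $q_s=p$. So any off-marginal initialization, such as the OOD noisy latent of \cref{fig:mismatch}, has strictly decreasing divergence from $p$.

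If $p$ satisfies a log-Sobolev inequality, then $\mathrm{KL}(s)\le e^{-cs}\mathrm{KL}(0)$, which gives exponential correction. I will state this as an optional quantitative strengthening under that extra hypothesis.

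To connect with \cref{eq:sde} in the paper's setting, I will apply the argument to each marginal $p_t$ with $t$ frozen, as in EDM's decomposition of the SDE into the probability-flow ODE plus a Langevin term. The ODE part preserves the marginals, and the Langevin part is the component analysed above.

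The main obstacle is justifying the integrations by parts: the boundary terms at infinity must vanish. I would handle this by first assuming enough tail decay of $q_s$ and $p$, and then proving the result for a smooth, rapidly decaying initial $q_0$. The general case follows by a density argument, using the regularity that the Fokker--Planck equation provides for $s>0$.
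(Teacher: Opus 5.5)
Your Step 1 is the paper's proof. The paper writes the Fokker--Planck equation, sets $p_t = p$, and uses $(\nabla\log p)\,p = \nabla p$ to turn the drift term into $-\tfrac12\Delta p$, which cancels $\tfrac12\Delta p$. Your form $\tfrac12\nabla\cdot(q_s\nabla\log(q_s/p))$ is the same computation.

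The difference is that the paper stops there and only asserts that the dynamics ``drive any distribution towards $p$''. It never proves the uniqueness the statement claims, and stationarity alone gives neither uniqueness nor attraction. Your Steps 2 and 3 supply exactly these missing pieces:
\begin{itemize}
\item the energy identity $\int p\,|\nabla h|^2 = 0$ for $h = q/p$ gives uniqueness;
\item the entropy-dissipation identity $\frac{d}{ds}\mathrm{KL} = -\tfrac12 I(q_s\|p)$, where $I(q\|p) = \int q\,\|\nabla\log(q/p)\|^2$ is the relative Fisher information, makes ``actively corrects'' a precise monotonicity statement, upgraded to exponential decay under a log-Sobolev inequality.
\end{itemize}
The paper's route is short and hypothesis-free only because it is formal. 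Yours actually establishes the proposition as stated, at the cost of analytic assumptions (positivity, regularity, non-explosion, decay) that the paper never mentions.

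Two points in your own argument need care.

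First, in Step 2 the integration by parts is applied to an \emph{unknown} stationary $q$. So ``assuming enough tail decay of $q$'' only proves uniqueness within that decay class. The density argument in your last paragraph concerns initial data for Step 3, not stationary solutions. To cover all stationary laws, you have two options:
\begin{itemize}
\item Combine your non-explosion assumption with the standard fact that a non-degenerate diffusion with smooth coefficients is strong Feller and irreducible, hence has at most one invariant probability measure (Doob's theorem).
\item Prove Step 3 in the integrated form $\mathrm{KL}(q_s\|p) + \tfrac12\int_0^s I(q_\tau\|p)\,d\tau \le \mathrm{KL}(q_0\|p)$ and start it from a stationary $q$. This forces $I(q\|p) = 0$, but only for $q$ with $\mathrm{KL}(q\|p) < \infty$.
\end{itemize}

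Second, strict decrease of $\mathrm{KL}$ is not convergence. Without the log-Sobolev hypothesis you have not shown $q_s \to p$. For that qualitative statement you need an extra ingredient, such as:
\begin{itemize}
\item total-variation convergence, which Doob's theorem also provides;
\item a LaSalle-type argument using $\int_0^\infty I(q_s\|p)\,ds \le 2\,\mathrm{KL}(q_0\|p)$, tightness of $\mathrm{KL}$ sublevel sets, and lower semicontinuity of $I$.
\end{itemize}

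Finally, the frozen-$t$ link to EDM and the paper's sampler is heuristic. It is not needed, since the proposition concerns only the fixed-$p$ Langevin SDE.
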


\begin{proof}
The time evolution of the probability density $p_t(\vx)$ is governed by the Fokker-Planck Equation (FPE):
\begin{equation}
    \frac{\partial p_t}{\partial t} = -\nabla \cdot \left( \frac{1}{2} (\nabla \log p) p_t \right) + \frac{1}{2} \Delta p_t.
\end{equation}
We verify the stationarity by setting $p_t(\vx) = p(\vx)$. Using the identity $(\nabla \log p) p = \nabla p$, the drift term becomes $-\frac{1}{2} \nabla \cdot (\nabla p) = -\frac{1}{2} \Delta p$.
This exactly cancels the diffusion term $\frac{1}{2} \Delta p$, yielding $\frac{\partial p_t}{\partial t} = 0$. Thus, the dynamics inherently drive any distribution towards $p(\vx)$, correcting deviations accumulated from prior steps.
\end{proof}

\begin{proposition}[Equivalence of Stochastic Term.]
  Our proposed stochastic sampling step, which acts on the noise prediction component, acts as a valid discretization of a reverse-time SDE by introducing an explicit diffusion term to the standard Rectified Flow ODE.   
\end{proposition}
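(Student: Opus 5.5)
The plan is to write one step of the stochastic sampler explicitly and compare it with the Euler discretization of the ODE in \cref{eq:pfode}. The goal is to show that the step has the form
\[
\vx_{t_\text{next}} = \vx_t + \vv_\theta(\vx_t,t)\,\Delta t + (\text{drift correction}) + g(t)\sqrt{|\Delta t|}\,\xi, \qquad \xi\sim\mathcal N(0,\rmI),
\]
where $\Delta t=\sigma_{t_\text{next}}-\sigma_t<0$. This is an Euler--Maruyama step for an SDE of the form \cref{eq:sde}.

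\textbf{Step 1: the unperturbed update is the Euler step.} Substitute \cref{eq:posterior_mean,eq:posterior_noise} into \cref{eq:ode_update}. This gives $\vx_{t_\text{next}}=\vx_t+(\sigma_{t_\text{next}}-\sigma_t)\vv_\theta(\vx_t,t)$, which is exactly the Euler step of \cref{eq:pfode}.

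\textbf{Step 2: insert the re-noising.} Replace $\hat\vx_{1|t}$ by $\sqrt{1-\gamma}\,\hat\vx_{1|t}+\sqrt{\gamma}\,\epsilon$ as in \cref{eq:renoise}. The update splits into three terms:
\[
\vx_{t_\text{next}} = \underbrace{\vx_t+\Delta t\,\vv_\theta}_{\text{ODE step}} \;-\; \sigma_{t_\text{next}}\bigl(1-\sqrt{1-\gamma}\bigr)\hat\vx_{1|t} \;+\; \sigma_{t_\text{next}}\sqrt{\gamma}\,\epsilon .
\]
The last term is Gaussian with variance $\sigma_{t_\text{next}}^2\gamma$. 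Matching it to $g(t)^2|\Delta t|$ defines $g$. To make $g$ finite as the step size shrinks, $\gamma$ must scale with the step; the paper's choice $\gamma=\sigma_t$ should be read as $\gamma=\eta\sigma_t|\Delta t|$ with a constant $\eta$ absorbed into $\lambda$-like tuning.

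\textbf{Step 3: identify the middle term as drift.} Use $1-\sqrt{1-\gamma}=\gamma/2+O(\gamma^2)$. The middle term then becomes $-\tfrac{\gamma}{2}\sigma_{t_\text{next}}\hat\vx_{1|t}$, which is $O(|\Delta t|)$ and therefore a drift contribution. Tweedie's formula for the rectified-flow path expresses the posterior noise through the score, $\hat\vx_{1|t}=-\sigma_t\nabla\log p_t(\vx_t)$. Substituting this shows the extra drift equals $\tfrac{1}{2}g(t)^2\nabla\log p_t(\vx_t)\,dt$ up to sign and orientation conventions, with time running backward.

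\textbf{Step 4: conclude.} The scheme is the Euler--Maruyama discretization of
\[
d\vx=\bigl[\vv_t-\tfrac12 g^2\nabla\log p_t\bigr]dt+g\,d\vw_t .
\]
This is the standard family of reverse SDEs sharing the marginals of the flow ODE: the Langevin correction of the SDE Correction Mechanism proposition, added to the probability-flow ODE. By the Fokker--Planck argument in that proposition, the marginals are preserved and deviations are pulled back toward $p_t$.

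\textbf{Main obstacle.} The hardest step is Step 3. I must check that the coefficient of the $\hat\vx_{1|t}$ drift is exactly half the injected variance with the correct sign, since only this balance keeps $p_t$ stationary. My first attempt is to do the computation to first order in $|\Delta t|$ and compare coefficients. If exact matching fails, I would state the result as ``consistent up to $O(|\Delta t|)$ drift reweighting,'' which still yields a valid SDE with diffusion term $g(t)\,d\vw_t$.
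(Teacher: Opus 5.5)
Your proof is correct, and it takes a genuinely different and more rigorous route than the paper's.

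\textbf{The paper's argument.} The paper substitutes $\sqrt{1-\gamma}\,\hat\vx_{1|t}+\sqrt{\gamma}\,\boldsymbol{\epsilon}$ into the interpolation update. It then labels $(1-t_\text{next})\hat\vx_{0|t}+t_\text{next}\sqrt{1-\gamma}\,\hat\vx_{1|t}$ the ``effective drift'' and $t_\text{next}\sqrt{\gamma}\,\boldsymbol{\epsilon}$ the ``effective diffusion,'' and concludes because the update has the shape of an Euler--Maruyama step. It never checks that the noise variance is $O(|\Delta t|)$, or that the non-ODE part of the drift is $O(|\Delta t|)$. It also never identifies which SDE is being discretized.

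\textbf{Your argument.} You subtract the exact Euler step of \cref{eq:pfode} and isolate the extra term $-\sigma_{t_\text{next}}(1-\sqrt{1-\gamma})\,\hat\vx_{1|t}$. Using $\hat\vx_{1|t}=-\sigma_t\nabla\log p_t(\vx_t)$, you recognize it as a score drift. The check you flag as the main obstacle does go through:
\begin{itemize}
\item To first order the term is $\tfrac{\gamma}{2}\sigma_{t_\text{next}}\sigma_t\nabla\log p_t$, while $\tfrac12 g^2|\Delta t|\nabla\log p_t=\tfrac{\gamma}{2}\sigma_{t_\text{next}}^2\nabla\log p_t$.
\item Their ratio is $\sigma_t/\sigma_{t_\text{next}}=1+O(|\Delta t|)$ away from $\sigma=0$.
\item Since $dt=\Delta t<0$, the term equals $-\tfrac12 g^2\nabla\log p_t\,dt$ with the correct sign: it points up the density, as a Langevin correction should.
\end{itemize}

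\textbf{What each approach buys.} Your version gives a genuine consistency result with the marginal-preserving reverse SDE. That is what actually justifies invoking the Fokker--Planck argument of the SDE Correction Mechanism proposition. Note that \cref{eq:sde} in the main text omits this score term entirely.

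The price, which you state honestly, is that $\gamma$ must scale like $|\Delta t|$. With the literal choice $\gamma=\sigma_t$, both the injected noise and the extra drift are $O(1)$ per step. So there is no SDE limit as the step size shrinks. The paper's conclusion then holds only in a formal, shape-matching sense, which your reinterpretation makes explicit rather than leaving implicit.
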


\begin{proof}
 Recall that the standard deterministic (ODE) update in Rectified Flow is given by linear interpolation: \begin{equation} \vx_{t_\text{next}} = (1-t_\text{next})\hat{\vx}_{0|t} + t_\text{next}\hat{\vx}_{1|t}. \end{equation} Our method introduces stochasticity by perturbing the target noise prediction $\hat{\vx}{1|t}$. Specifically, we replace $\hat{\vx}_{1|t}$ with $\hat{\vx}_{1|t}^{\text{stoch}} = \sqrt{1-\gamma} \hat{\vx}_{1|t} + \sqrt{\gamma} \boldsymbol{\epsilon}$, where $\boldsymbol{\epsilon} \sim \mathcal{N}(0, \mathbf{I})$ and $\gamma$ is a scheduling parameter. Substituting this into the update rule yields: \begin{align} \vx_{t_\text{next}}^{\text{SDE}} &= (1-t_\text{next})\hat{\vx}_{0|t} + t_\text{next} \left( \sqrt{1-\gamma} \hat{\vx}_{1|t} + \sqrt{\gamma} \boldsymbol{\epsilon} \right) \\ &= \underbrace{\left[ (1-t_\text{next})\hat{\vx}_{0|t} + t_\text{next}\sqrt{1-\gamma} \hat{\vx}_{1|t} \right]}_{\text{Effective Drift (Deterministic)}} + \underbrace{\left[ t_\text{next}\sqrt{\gamma} \boldsymbol{\epsilon} \right]}_{\text{Effective Diffusion (Stochastic)}}. \end{align} 
 The resulting update equation takes the form of a standard Euler-Maruyama discretization of an SDE ($d\vx = \mathbf{f}(\vx,t)dt + g(t)d\mathbf{w}$). The first term represents the drift (the intended restoration path), while the second term represents the diffusion ($g(t)d\mathbf{w}$), with the noise magnitude scaled by $t_\text{next}\sqrt{\gamma}$. This explicitly proves that our method injects the necessary stochasticity to correct out-of-distribution (OOD) errors during sampling.
\end{proof}

\paragraph{Derivation of closed-form guidance.}
To enforce the identity constraint, we minimize the loss $\mathcal{L} = \|\mathcal{M} \odot (\vx - \hat{\vx}_{0|t})\|^2$ with respect to the noisy latent $\vx_t$. Applying the chain rule yields $\nabla_{\vx_t}\mathcal{L} = (\frac{\partial \hat{\vx}_{0|t}}{\partial \vx_t})^\top \nabla_{\hat{\vx}_{0|t}}\mathcal{L}$. Calculating the exact Jacobian $\frac{\partial \hat{\vx}_{0|t}}{\partial \vx_t}$ requires computationally expensive backpropagation through the diffusion backbone. To achieve an efficient closed-form solution, we follow standard Diffusion Posterior Sampling practice and approximate this Jacobian as a scalar identity matrix (absorbing scaling factors into the step size $\lambda(t)$). Consequently, the gradient simplifies directly to the masked residual $\nabla_{\vx_t}\mathcal{L} \propto - \mathcal{M} \odot (\vx - \hat{\vx}_{0|t})$, enabling fast, derivative-free guidance updates.

\section{Background}

\subsection{3D Gaussian Splatting}
3D Gaussian Splatting (3D-GS)~\cite{kerbl3Dgaussians} is a photorealistic 3D scene representation and real-time rendering technique~\cite{cui2025streetsurfgs}. Instead of using traditional polygons or volumetric grids, 3D-GS models a scene as a collection of millions of explicit, anisotropic 3D Gaussians. 
Each Gaussian is defined by several key properties: its 3D position (mean), shape (a 3D covariance matrix, allowing it to be a sphere, needle, or flat disk), color (often represented by Spherical Harmonics to capture view-dependent effects), and opacity (alpha). The scene is created by optimizing these properties, typically starting from a sparse point cloud generated by Structure-from-Motion (SfM). During this optimization, a process of adaptive density control dynamically adds (clones) or removes (prunes) Gaussians to efficiently reconstruct fine details. To render a new view, these 3D Gaussians are projected onto the 2D image plane, sorted by depth, and alpha-blended back-to-front in a highly efficient rasterization process.

\subsection{4D Gaussian Splatting}
To extend 3D-GS to dynamic scenes, 4D Gaussian Splatting (4D-GS)~\cite{Wu_2024_CVPR} techniques model how Gaussians move and change over time. Instead of storing separate 3D-GS models for each frame, a holistic 4D representation is learned. A common strategy is to define a set of canonical 3D Gaussians and then predict their deformation at any given timestamp. To efficiently encode this 4D space-time information, methods often employ a decomposed neural voxel grid, drawing inspiration from {HexPlane}~\cite{cao2023hexplane}. This approach factorizes the 4D space ($x, y, z, t$) into several lower-dimensional planes (e.g., $xy$, $xz$, $yt$). To find a Gaussian's deformation, its 4D coordinates are used to query features from these planes. The aggregated features are then passed through a lightweight MLP to predict the transformation (such as translation or rotation), allowing the scene to be reconstructed at novel times.

\subsection{Video Diffusion Transformer Backbone}
Our framework leverages the Wan~\cite{wan2025} architecture, a state-of-the-art text-to-video model built upon the Diffusion Transformer~\cite{ma2024sit, Peebles2022DiT, cui2025optimizing} (DiT) paradigm. This architecture consists of three core components: 
1) A spatio-temporal VAE that compresses input videos from pixel space into a compact latent space; 
2) A robust text encoder (e.g., umT5), selected for its multilingual capabilities and convergence properties, to encode text prompts; 
3) The central Diffusion Transformer, which processes sequences of video latent tokens. 
Within the Transformer blocks, text conditions are injected via cross-attention to ensure semantic fidelity. Temporal information is embedded using a shared MLP that predicts modulation parameters for each block, a design that efficiently enhances performance with minimal parameter overhead.

The model is trained using the Flow Matching framework, specifically Rectified Flows (RF), which provides a stable and theoretically grounded generative process. RF models the transition from pure noise $\vx_0$ to the real data latent $\vx_1$ as a linear interpolation (Ordinary Differential Equation). For a time step $t \in [0, 1]$, the training input $\vx_t$ is defined as:
\begin{equation}
    \vx_t = t \cdot \vx_1 + (1 - t) \cdot \vx_0.
\end{equation}
The model is trained to predict the velocity field $\mathbf{v}_t$ of this trajectory, where the ground truth velocity is simply $\mathbf{v}_t = \vx_1 - \vx_0$. The training objective minimizes the mean squared error (MSE) between the predicted and ground truth velocity. Training follows a multi-stage curriculum, progressing from low-resolution images to high-resolution joint image-video training.

\section{More Implementation Details}

\subsection{Preserved Area Segmentation}
To ensure identity preservation, we define a preserved area mask $\mathcal{M}$. We utilize Grounded-DINO-SAM2 to segment the human region, denoted as $\mathcal{M}_{\text{human}}$, and the garment region, $\mathcal{M}_{\text{garment}}$. The final preserved area is obtained by excluding the garment region from the human mask:
\begin{equation}
    \mathcal{M} = \mathcal{M}_{\text{human}} \setminus \mathcal{M}_{\text{garment}}.
\end{equation}
To align with the latent space of the Video VAE, we downsample the binary mask $\mathcal{M}$ to match the latent dimensions (specifically, downsampling by a factor of 8 spatially and 4 temporally).

\subsection{Residual Field Configuration}
For the non-rigid motion modeling, we employ a multi-resolution HexPlane module. The base resolution $R(i, j)$ is set to 64 and is progressively upsampled by a factor of 2. The Gaussian deformation decoder is implemented as a lightweight MLP using zero-initialization for the final layer weights, ensuring the deformation field starts as an identity mapping.

\subsection{Personalized Video Diffusion}

\paragraph{Control DiT via Channel-wise Concatenation.}
We inject dense spatiotemporal conditions (e.g., the control video) directly into the main branch via latent space augmentation. Unlike adapter-based methods that operate on intermediate features, we concatenate the encoded control latents $\mathbf{y}$ with the noisy video latents $\vx$ along the channel dimension prior to the patch embedding layer. Formally, the input to the DiT becomes $[\vx; \mathbf{y}] \in \mathbb{R}^{B \times (C_x + C_y) \times F \times H \times W}$. This strategy ensures that every spatial patch processed by the Transformer is explicitly conditioned on the corresponding local structural information.

\paragraph{Reference Image Fusion.}
To achieve appearance transfer, we treat the reference image as a visual prompt. The reference image is encoded into latents and passed through a projection layer to match the embedding dimension of the DiT. These projected features are flattened and concatenated with the video tokens along the sequence dimension, effectively serving as a ``visual prefix." By integrating the reference signal into the input sequence, the DiT utilizes its global self-attention mechanism to attend to reference appearance details across all generated frames. These prefix tokens are masked out during the final video reconstruction.

\paragraph{Training Details.}
To facilitate the personalized video generation, we implement the proposed Wan-Control framework based on the \texttt{DiffSynth} library. The model is fine-tuned on a curated subset of the TikTok dataset (available via HuggingFace), comprising approximately 20,000 video clips. For high-fidelity motion guidance, we pre-process all video frames using \texttt{DWPose} to extract dense human pose annotations. 
The training process is conducted on a cluster of $8 \times$ NVIDIA RTX A6000 GPUs for approximately 15,000 iterations. We utilize a constant learning rate with a batch size optimized for the GPU memory. For further architectural details and hyper-parameter configurations.
We also find some good open-sourced alternative, such as \href{https://huggingface.co/alibaba-pai/Wan2.1-Fun-V1.1-1.3B-Control}{https://huggingface.co/alibaba-pai/Wan2.1-Fun-V1.1-1.3B-Control}, and \href{https://huggingface.co/alibaba-pai/Wan2.2-Fun-A14B-Control}{https://huggingface.co/alibaba-pai/Wan2.2-Fun-A14B-Control}, as our \textbf{video backbone}.

\subsection{Baseline Implementation}
We mainly classify our baselines in \cref{tab:difference}, selecting several representative methods with official 
implementation\footnote{For example, since Human4DiT/CharactorShot is not open-sourced, we choose SV4D as a representative method of reconstructing from MV video.} for comparison.

\paragraph{Disco4D~\cite{pang2025disco4d}.}
Due to the unavailability of key components in the official repository, we re-implemented the core algorithm within our own framework. 
Following the supervision strategy of DreamGaussian4D~\cite{ren2023dreamgaussian4d}, this method combines Mean Squared Error (MSE) loss from a single-view driving video with Score Distillation Sampling (SDS) guidance from Zero-123~\cite{liu2023zero1to3}. 
To ensure a fair comparison, we generated the required driving video using our personalized Wan-based model, conditioned on the front-view skeleton rendering and the reference image. We adopted the SDS implementation directly from the DreamGaussian4D repository.

\paragraph{SV4D 2.0~\cite{yao2024sv4d2}.}
SV4D is a multi-view video diffusion model fine-tuned on Stable Video Diffusion (SVD) using a large-scale 4D dataset filtered from Objaverse. It takes a single-view video as input and outputs synchronized multi-view videos. 
We utilized the same driving video generated for Disco4D as the input. 
However, we observe a severe identity shift between the output and input videos. We attribute this to the domain gap, as SV4D is trained primarily on synthetic Objaverse objects rather than realistic human captures.

\paragraph{PERSONA~\cite{sim2025persona}.}
We utilize the official implementation of PERSONA. 
This method employs MimicMotion~\cite{zhang2025mimicmotion}, a pose-driven video diffusion model, to generate synthetic video data which is then used to optimize a canonical 3D Gaussian field and a pose-dependent deformation field. 
The pipeline relies on an extensive set of off-the-shelf components, including Sapiens~\cite{khirodkar2024sapiens}, DECA, and ResShift.
Despite incorporating various regularization terms, such as geometry weighted optimization and multiple monocular normal/depth priors, we find that the method struggles to preserve the fine-grained identity of the subject during complex motions.

\paragraph{LHM~\cite{qiu2025LHM}.}
We use the official implementation of LHM as a representative kinematics-based baseline. 
LHM effectively reconstructs a human from a single-view image with high-fidelity identity and efficient inference speed. 
However, as it relies purely on kinematics-based deformation to animate the 3D Gaussians, it fundamentally lacks the ability to model non-rigid dynamics such as clothing deformation. 
(Note: Our method builds upon this kinematics-based representation, using it as a starting point to learn residual non-rigid motions via video diffusion priors.)

\begin{table}[]
    \centering
    \resizebox{\linewidth}{!}{
    \begin{tabular}{lcccccccc}
    
    \toprule 
        Method & Training Objective & Single-view  Image Input & Skeleton-controllable & Identity Preservation & Non-rigid Motion  & High-quality Rendering\\
        \midrule 
        Disco4D~\cite{pang2025disco4d} & MSE+SDS & $\checkmark$ & $\times$ &  $\checkmark$ & $\times$ & $\times$  \\
        AKD~\cite{li2025akd} & SDS & $\checkmark$ & $\times$ & $\checkmark$ & $\times$ & $\times$ \\
        PhysAvatar~\cite{PhysAavatar24} & MSE & $\times$ & $\checkmark$ & $\checkmark$ & $\checkmark$ & $\checkmark$ \\
        SV4D/SV4D 2.0~\cite{xie2024sv4d, yao2024sv4d2} & MSE  & $\checkmark$ & $\times$ & $\times$ & $\checkmark$ & $\times$ \\
        CharacterShot~\cite{gao2025charactershot} & MSE & $\checkmark$ & $\checkmark$ & $\times$ & $\checkmark$ & $\times$ \\
        Human4DiT~\cite{shao2024human4dit}  & MSE & $\checkmark$ & $\checkmark$ & $\times$ & $\checkmark$ & $\times$ \\
        PERSONA~\cite{sim2025persona} & MSE & $\checkmark$ & $\checkmark$ & $\times$ & $\checkmark$ & $\times$ \\
        LHM~\cite{qiu2025LHM} & - & $\checkmark$ & $\checkmark$ & $\checkmark$ & $\times$ & $\times$  \\
        Ours & MSE & $\checkmark$ & $\checkmark$ & $\checkmark$ & $\checkmark$ & $\checkmark$ \\
    \bottomrule 
    \end{tabular}
    }
    \caption{Difference among the other human (character) animation methods.
    ``-'' means there is no optimization process in the animation.}
    \label{tab:difference}
\end{table}

\subsection{Details of Competitive Sampling Methods}
We compare our approach against several representative methods capable of transforming low-quality source inputs into high-quality targets using off-the-shelf diffusion priors.
Since some of algorithms are designed from DDPM, we implement them in the context of flow matching with their core ideas.

\paragraph{Vanilla SDEdit~\cite{meng2022sdedit}.}
SDEdit serves as the foundational baseline for image and video restoration. The method follows a strictly stochastic process: it first perturbs the source input $\vx_\text{src}$ by adding Gaussian noise to reach an intermediate time step $t_{0} \in (0, 1)$. This forward diffusion process effectively destroys high-frequency artifacts. Subsequently, the standard reverse ODE/SDE sampling is applied from $t_{0}$ to $t=0$ to generate the restored output. While effective for minor denoising, it often faces a trade-off between preserving identity (low $t_{0}$) and removing significant artifacts (high $t_{0}$).

\paragraph{MCS~\cite{wang2024vistadream}.}
Multiview Consistency Sampling (MCS) was originally proposed to balance fidelity and generation quality in 3D scene generation. The authors observe that while higher noise injection improves realism, it degrades the structural fidelity to the input. To mitigate this, MCS modifies the posterior mean during sampling to explicitly include signal from the input image. 
In our implementation, we adapt this to the Flow Matching framework. At each sampling step, we modify the predicted posterior mean $\hat \vx_{0|t}$ to incorporate a weighted component of the source input $\vx_\text{src}$.
This bias term forces the generation trajectory to remain structurally close to the input video, ensuring that the ``hallucinated" details align with the original identity.

\paragraph{HFS-SDEdit~\cite{elevating3d}.}
HFS-SDEdit aims to preserve structural details by explicitly fusing frequency components in the latent space. It operates on the hypothesis that the structural identity resides in high-frequency signals. During the reverse sampling process, the method replaces the high-frequency component of the current denoised latent $\vx_t$ with that of the noisy source input. The update rule is defined as:
\begin{equation}
    \vx'_t = \text{LPF}(\vx_t) + \text{HPF}(\vx_{\text{src},t}),
\end{equation}
where $\text{LPF}$ and $\text{HPF}$ denote Gaussian low-pass and high-pass filters, respectively, and $\vx_{\text{src},t}$ is the noised version of input data corresponding to time $t$. This operation forces the solver to generate realistic low-frequency content (lighting, materials) while rigorously adhering to the edges and boundaries of the source input.

\paragraph{NC-SDEdit~\cite{yang2024noise}.}
We adapt the Noise Calibration (NC) strategy to our Flow Matching framework. While the original implementation calibrates the noise estimate $\boldsymbol{\epsilon}$, our adaptation operates directly on the estimated clean data (posterior) to ensure structural consistency.
In each sampling step $t$, we first solve the flow equation to estimate the clean target $\hat{\vx}_{0|t}$ from the current noisy latent $\vx_t$ and predicted velocity $\vv_t$.
We then calibrate this posterior by replacing its high-frequency components with those of the source reference $\vx_\text{src}$:
\begin{equation}
    \hat{\vx}'_{0|t} = \hat{\vx}_{0|t} - \text{HPF}(\hat{\vx}_{0|t}) + \text{HPF}(\vx_\text{src}),
\end{equation}
where $\text{HPF}(\cdot)$ extracts high-frequency details via Fourier transform. 
Finally, the solver (e.g., Euler step) computes the latent for the next timestep $\vx_{t_\text{next}}$ using this calibrated target $\hat{\vx}'_{0|t}$. This approach enforces strict structural alignment with the input video throughout the generation trajectory while allowing the low-frequency content to be refined by the diffusion prior.

\paragraph{FlowEdit~\cite{kulikov2024flowedit}.}
FlowEdit constructs a mapping between source and target distributions by leveraging the reversibility of ODEs. It defines the editing direction based on the difference between a source velocity (conditioned on a source prompt) and a target velocity (conditioned on a target prompt). 
In our experiments, we utilize a negative prompt (describing low-quality attributes) to match the source distribution and a positive prompt for the target. However, we find that because FlowEdit relies on the model's semantic understanding of the prompt to model the degradation, it often fails to correct the severe, non-semantic out-of-distribution (OOD) artifacts present in the coarse 3D renderings, as these specific artifacts are not easily described by text.

\section{Ablation Studies (Extended)}

\subsection{Quantitative Ablation}
Table~\ref{tab:r1_supp} provides a comprehensive quantitative evaluation of each key component within our framework, including the stochastic sampling mechanism, the self-guidance strategy, and the personalized video diffusion module. According to the results, our full model configuration achieves the optimal balance between visual fidelity and identity preservation. Specifically, while the exclusion of self-guidance leads to a marginal improvement in the Frechet Inception Distance, it incurs a substantial degradation in identity consistency, as reflected by the significant drop in the CLIP-Identity score. This observation validates that self-guidance is indispensable for maintaining the subject's unique features throughout the generation process. Furthermore, the integration of stochastic sampling and personalized diffusion proves essential for temporal coherence and motion realism, with the full model yielding the lowest Frechet Video Distance. Although individual modules may favor specific metrics, the synergistic effect of all components ensures that the model produces high-quality videos without compromising the structural or stylistic integrity of the personalized target.

\begin{table}[h]\centering\caption{Quantitative ablation study of the proposed components. The results demonstrate that the full model configuration achieves the most robust performance across all evaluation metrics.}\label{tab:r1_supp}\begin{tabular}{lccccc}\toprule
Metrics & Coarse Model & w/o Stochastic & w/o Self-Guidance & w/o Personalized & \textbf{Full Model} \\
\midrule
FID $\downarrow$    & 199.1 & 187.4 & \textbf{104.1} & 125.3 & \underline{105.3}\\
FVD $\downarrow$    & 367.0 & 349.7 & \underline{298.8} & 301.4 & \textbf{295.2}\\
CLIP-Identity $\uparrow$   & \textbf{0.8847} & 0.8804 & 0.8220 & 0.8645 & \underline{0.8838} \\
\bottomrule\end{tabular}\end{table}

\subsection{More Results of Self-guided Stochastic Sampling}
Due to space constraints in the main manuscript, we provide additional qualitative comparisons to validate the efficacy of our core technical contribution: \textbf{self-guided stochastic sampling}. 
We evaluate our approach against two distinct baselines: 
1) \textbf{Direct Generation}, which uses the pretrained video model directly (with reference image and 2D skeleton sequence); and 
2) \textbf{Standard ODE-based Restoration}, where we employ MCS~\cite{wang2024vistadream} as a representative deterministic sampling method. 
Dynamic visualizations of these comparisons can be found in the Supplementary Video (\texttt{00:40} - \texttt{01:32}).

As illustrated in \cref{fig:supp-abl1}, the challenges of this task are evident. The initial \textbf{mesh-rigged animation} (Input) exhibits significant artifacts, including unnatural garment dynamics and blurred edges, consistent with the limitations discussed in the main paper (e.g., Fig. 1). 
\textbf{Direct Generation}, while achieving high realism, suffers from severe identity loss and hallucinations; notably, the model generates extraneous accessories such as a bag (Row 2) or a watch (Row 3), rendering it unsuitable for faithful reconstruction. 
Furthermore, standard \textbf{ODE-based sampling} (MCS) fails to effectively correct the out-of-distribution nature of the coarse rendering, resulting in over-smoothed textures and persistent blurring along garment boundaries. 
In contrast, our \textbf{self-guided stochastic sampling} effectively bridges the gap between realism and fidelity. It restores photorealistic details and valid non-rigid dynamics while strictly preserving the original human identity.

\subsection{Sensitivity Analysis of Initial Noise Strength}

The initial noise strength, denoted as $t_0$, serves as the critical hyperparameter in our self-guided stochastic sampling strategy. It governs the trade-off between the restoration capability and the fidelity to the initial coarse rendering. As illustrated in \cref{fig:supp-abl-str}, we conduct a comprehensive sensitivity analysis by varying $t_0$ across the range $[0.2, 0.8]$. 
At lower noise levels ($t_0 \in \{0.2, 0.4\}$), the sampling trajectory is too short to effectively correct the Out-of-Distribution (OOD) artifacts, resulting in outputs that retain the degradation of the source mesh-rigged animation. 
Conversely, at higher noise levels ($t_0 \in \{0.6, 0.8\}$), our method demonstrates significant robustness. Unlike standard restoration methods where high noise often leads to identity loss, our self-guidance mechanism ensures that the subject's identity remains remarkably stable even at $t_0 = 0.8$. 
Ultimately, we empirically select $t_0 = 0.6$ as the default setting, as it strikes an optimal balance between generation quality, identity preservation, and sampling efficiency.
\begin{figure}[t!]
    \centering
    \includegraphics[width=0.8\linewidth]{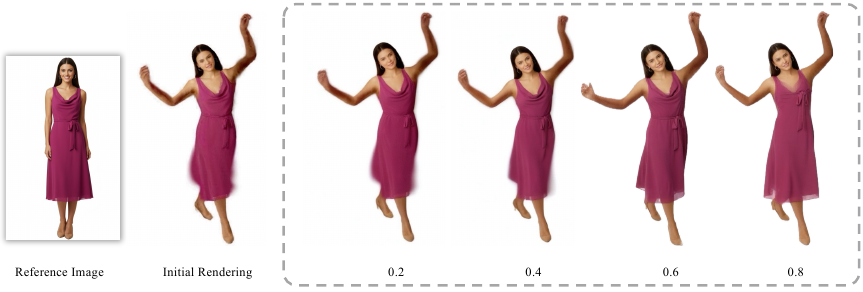}
    \caption{\textbf{Sensitivity analysis of the initial noise strength $t_0$.} 
    We visualize restoration results across varying noise strengths. 
    Low noise levels ($t_0=0.2, 0.4$) fail to deviate sufficiently from the source, leaving artifacts from the coarse mesh rendering intact. 
    Higher noise levels ($t_0=0.6, 0.8$) effectively hallucinate plausible details and correct non-rigid dynamics. 
    Notably, thanks to our self-guidance mechanism, the identity is preserved even at high noise strengths ($t_0=0.8$), overcoming the traditional quality-fidelity trade-off.}
\label{fig:supp-abl-str}  
\end{figure}

\subsection{More Ablations in 4D Optimization}

\paragraph{Adaptive densification.}
Adaptive densification and pruning are fundamental mechanisms in 3D Gaussian Splatting for capturing high-frequency details. We incorporate these strategies into our photorealistic 4D reconstruction pipeline. 
As demonstrated in \cref{fig:supp-abl-densify}, relying solely on the deformation of the canonical geometry is insufficient to model complex texture dynamics (e.g., shifting wrinkles). Without densification, the model fails to allocate sufficient primitives to these dynamic regions, causing the clothing textures to appear significantly blurred.

\paragraph{Mask loss regularization.}
Prior works, such as PERSONA, have established that geometric constraints are critical for fidelity. We validate this by ablating the mask loss during our 4D optimization. 
This regularization is particularly important in conjunction with our densification strategy. As shown in \cref{fig:supp-abl-mask}, without the mask loss to constrain the generation of new primitives, ``floaters" emerge in free space, and the boundary definition of the subject degrades compared to our full setting.

\begin{figure}[t]
    \centering
    \begin{subfigure}[b]{0.4\linewidth}
        \centering
        \includegraphics[width=\linewidth]{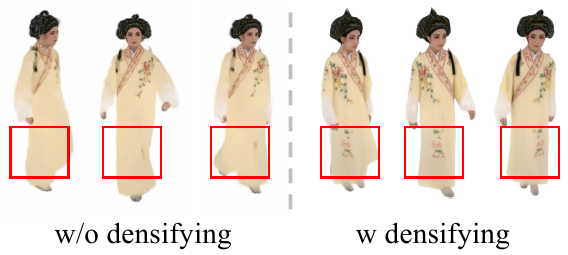}
        \caption{\textbf{Effect of Densification.}}
        \label{fig:supp-abl-densify}
    \end{subfigure}
    \hfill
    \begin{subfigure}[b]{0.52\linewidth}
        \centering
        \includegraphics[width=\linewidth]{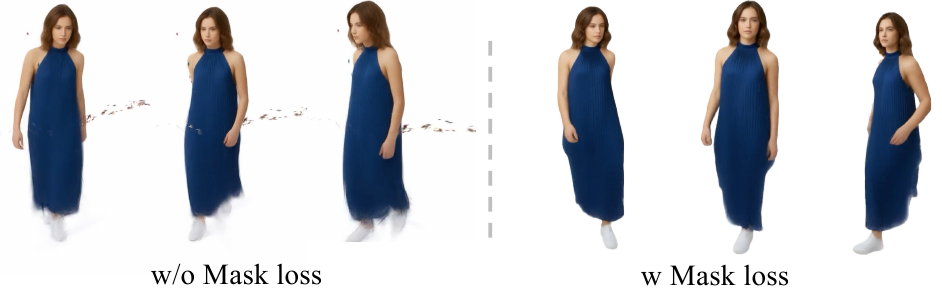}
        \caption{\textbf{Effect of Mask Loss.}}
        \label{fig:supp-abl-mask}
    \end{subfigure}
    
    \vspace{2mm}
    \begin{subfigure}[b]{0.48\linewidth} 
        \centering
        \includegraphics[width=\linewidth]{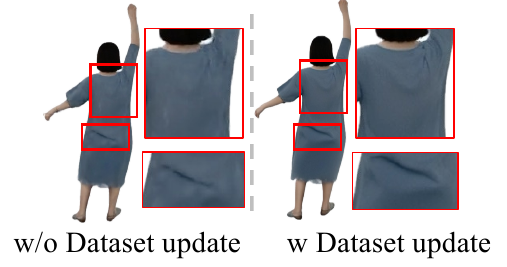}
        \caption{\textbf{Effect of Dataset Update.}}
        \label{fig:supp-abl-du}
    \end{subfigure}

    \vspace{-1mm}
    \caption{\textbf{Ablations on optimization strategies.} 
    (a) {Adaptive densification} is crucial for capturing high-frequency texture dynamics. 
    (b) {Mask loss regularization} is essential to constrain the geometry.
    (c) {Dataset update} mitigates over-smoothing caused by inconsistent supervision, allowing the model to converge on sharp, clear details.}
    \label{fig:abl-optim-combined}
    \vspace{-3mm}
\end{figure}
\begin{figure}
    \centering
    \includegraphics[width=0.73\linewidth]{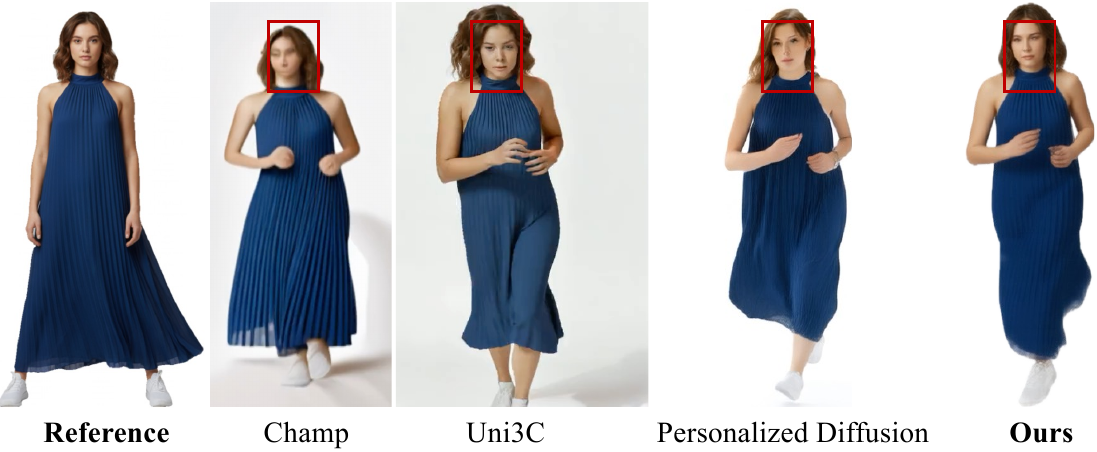}
    \caption{Comparison with image-based animation.}
    \label{fig:comp-image}
\end{figure}

\begin{figure}[h!] \centering
\includegraphics[width=\textwidth]{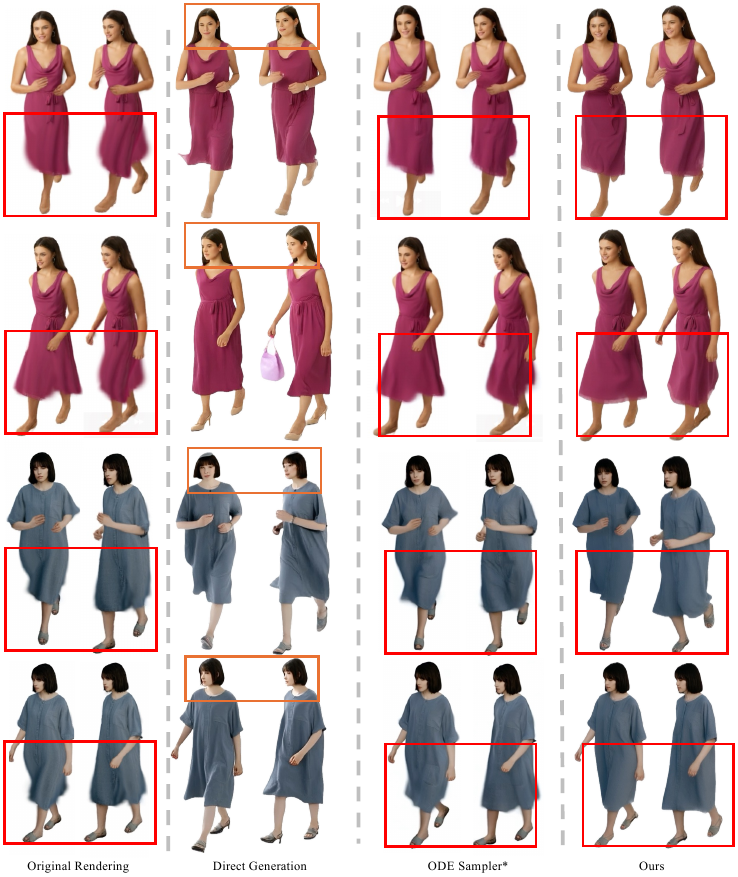}
\caption{\textbf{Visual comparison of sampling strategies.} Cases I: Two girls are walking (Row2/4) and running (Row1/3). The {Mesh-Rigged Animation} (Input) exhibits unrealistic artifacts, such as unnatural cloth dynamics and blurry edges. {Direct Generation} suffers from severe identity shift, introducing hallucinations like a bag (Row 2) or a watch (Row 3). {ODE Sampling} (represented by MCS~\cite{wang2024vistadream}) fails to recover high-frequency details, leaving garment edges blurry due to the OOD nature of the input. In contrast, {Ours} successfully restores high-fidelity details and realistic motion while maintaining strict identity consistency.    } \label{fig:supp-abl1}
\end{figure}

\begin{figure}[h!] \centering
\includegraphics[width=\textwidth]{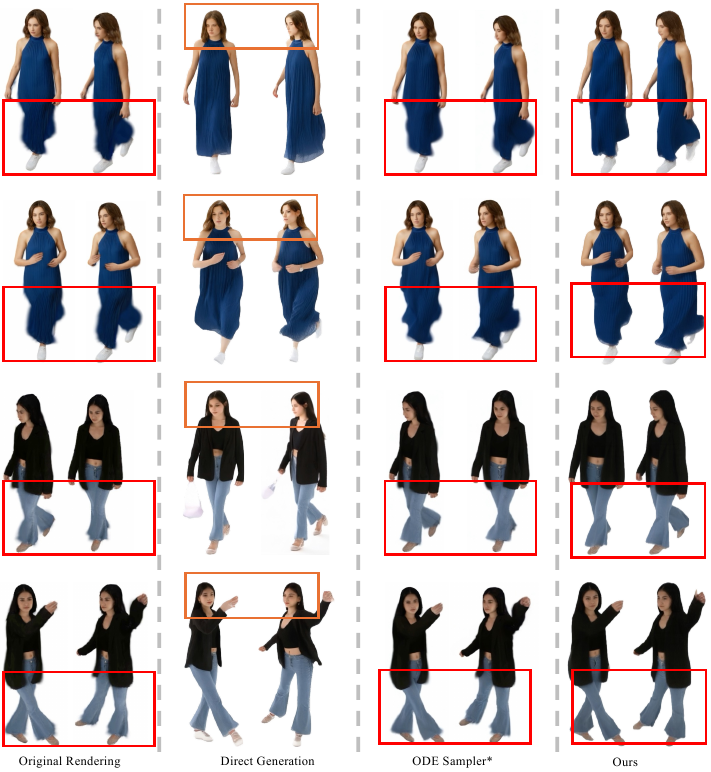}
    \caption{Visual comparison of sampling strategies. Case II: Two girls are walking (Row1/3), running (Row2), dancing (Row4).} \label{fig:supp-abl2}
\end{figure}

\begin{figure}[h!] \centering
    \includegraphics[width=\textwidth]{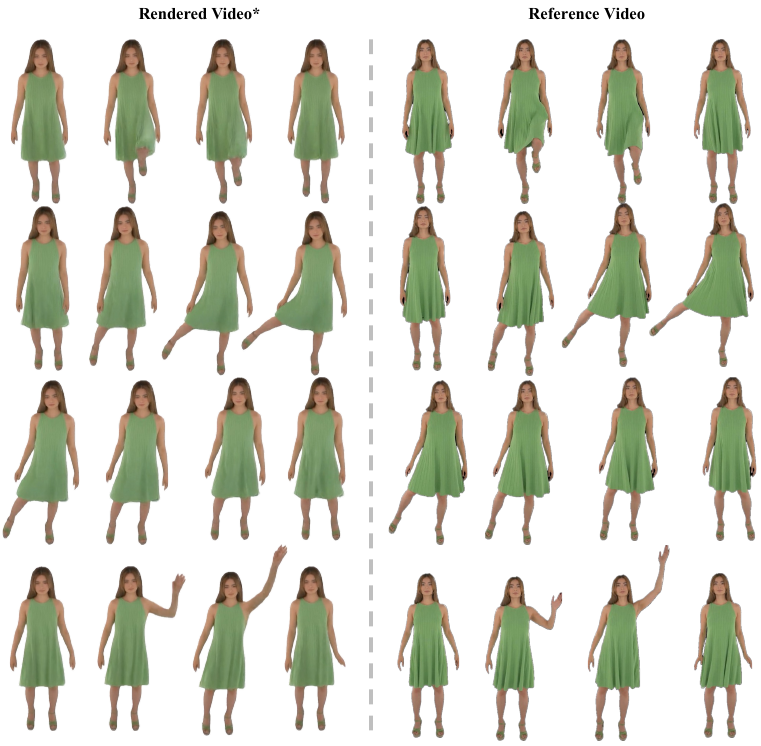}
\caption{\textbf{Qualitative evaluation on the ActorsHQ~\cite{isik2023humanrf} dataset (I).} 
We show single person with difference motions.
The asterisk (*) denotes renderings at a specific viewpoint (elevation $10^\circ$, azimuth $0^\circ$). 
Note that slight spatial misalignments between the generation and ground truth are due to inherent errors in the SMPL estimation derived from the source video. 
Despite relying on a single-view input, our method faithfully preserves human identity and captures complex non-rigid deformations (e.g., dress dynamics), even during extreme poses such as high leg raises.
}
\label{fig:append_com1}\end{figure}
\begin{figure}[h!] \centering
    \includegraphics[width=0.98\textwidth]{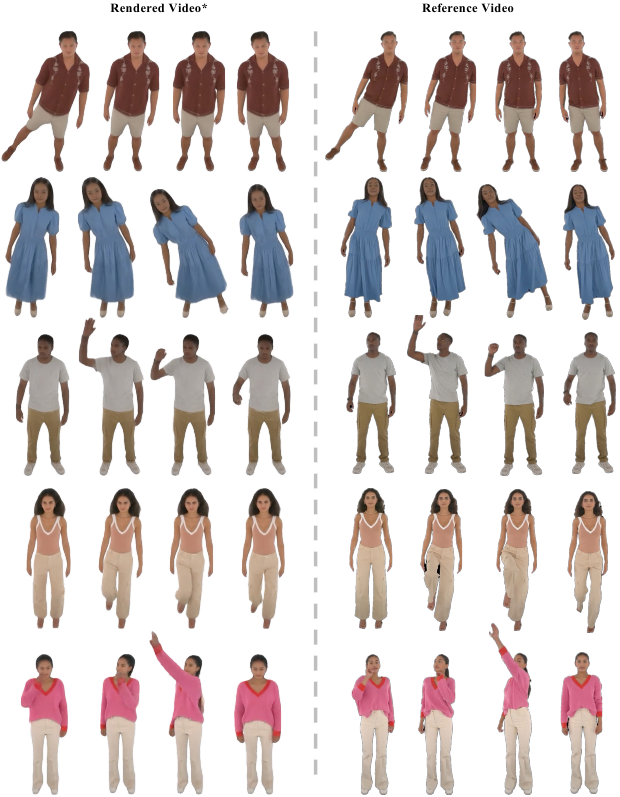}
    \caption{\textbf{Human reconstruction results in ActorsHQ~\cite{isik2023humanrf} dataset (II).}
    We show different person with diverse motions.} \label{fig:append_com2}
\end{figure}

\begin{figure}[h!] \centering
    \includegraphics[width=0.9\textwidth]{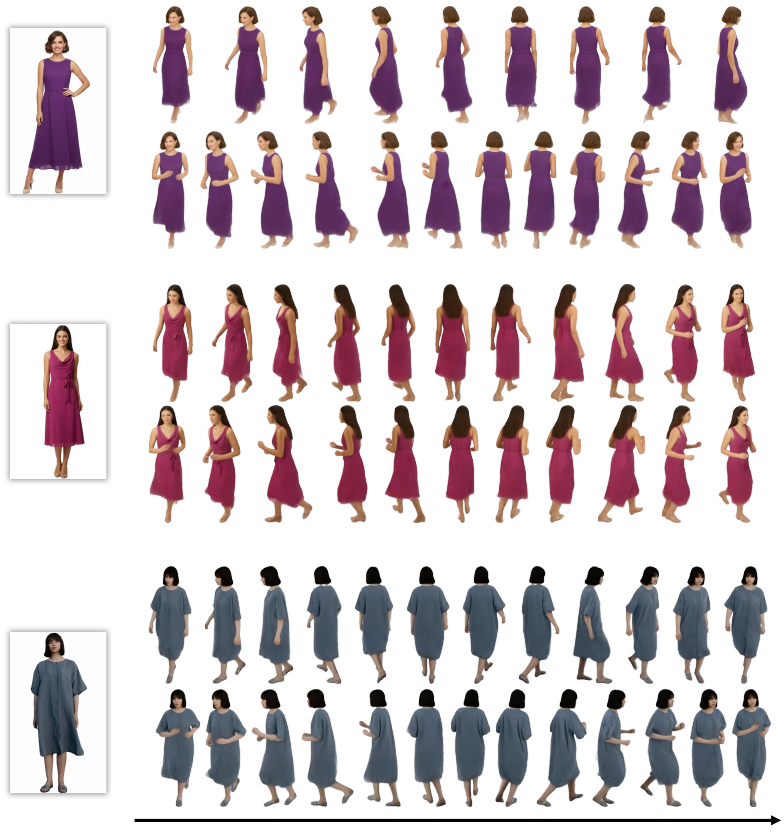}
    \caption{\textbf{Additional human animation results.} 
    We visualize diverse subjects performing various motions, rendered with dynamic 360-degree camera trajectories.} 
\label{fig:append_g1}
\end{figure}

\paragraph{Iterative dataset update.}
We compare our iterative dataset update strategy against a standard single-stage optimization.
As illustrated in \cref{fig:supp-abl-du}, single-stage optimization tends to result in over-smoothed textures, effectively ``averaging out" high-frequency details due to inherent view and temporal inconsistencies in the initial supervision.
In contrast, employing the dataset update mechanism allows the optimization to reject inconsistent noise and converge towards high-fidelity results, significantly sharpening fine-grained features such as dress wrinkles.

\section{Results (Extended)}
\subsection{Training Efficiency}
From a single image, we adopt LHM to obtain the canonical 3D Gaussians, and generate the basic mesh-rigged animations with prepared SMPLX mesh sequences within 1 minute.
During re-rendering and 4D optimization, we have $30k$ optimization iterations in total, and update our generated pseudo-ground truth per-$5k$ iterations.
Each video re-rerendering (sampling) step takes about $67$s in average, and we simultaneously update each trajectory.
The overall time cost is about $19$ mins.
In contrast, PERSONA needs more than 6 hours to create an animation (more than 4 hours for complex data preprocessing and long-sequence video generation, and additional $>1$ hour optimization).

\subsection{Discussion with Image-based Animation methods.}
To further evaluate the effectiveness of our framework, we compare our method with state-of-the-art image-driven animation models, including Champ and Uni3C, as well as our backbone, Personalized Diffusion. As summarized in Table~\ref{tab:r2_supp}, while image-based methods such as Uni3C achieve competitive rendering quality in terms of Frechet Video Distance, they struggle to maintain high identity consistency, particularly in challenging side-view perspectives. This is reflected in their lower CLIP-Identity scores compared to our approach. Our method consistently outperforms these baselines by leveraging the robust identity priors of the personalized video diffusion model.

\begin{table}[h]\centering\caption{Quantitative comparison with state-of-the-art image-driven animation methods. Our method achieves a superior balance between motion fidelity and identity preservation.}\label{tab:r2_supp}\begin{tabular}{lcccc}\toprule
Metrics & Champ & Uni3C & Personalized Diffusion & {Ours} \\
\midrule
FID $\downarrow$    & 196.3 & 132.3 & 138.5 & \textbf{112.8}\\
FVD $\downarrow$    & 467.2 & \textbf{284.4} & 330.6 & \underline{289.0} \\
CLIP-Identity $\uparrow$   & 0.7633 & 0.8357 & 0.8001 & \textbf{0.8844}\\
\bottomrule\end{tabular}\end{table}

A key advantage of our framework over video-based animation methods is the ability to \textbf{distill} the pose-controlled video diffusion model into a 4D Gaussian Splatting (4DGS) representation. Traditional video diffusion models require a time-consuming iterative denoising process to generate each sequence. In contrast, once our distillation process is complete, the \textbf{\textit{resulting 4D Gaussian representation allows for high-fidelity, real-time rendering of the personalized character in any viewpoint}} . This shift from generative inference to rasterization-based rendering significantly reduces the computational latency, making our approach highly suitable for interactive applications that require both personalized identity and responsive motion control.



\subsection{Results in ActorsHQ dataset~\cite{isik2023humanrf}}
To further assess the generalization capability of our framework, we evaluate \MethodName on the high-fidelity ActorsHQ dataset~\cite{isik2023humanrf}. As shown in \cref{fig:append_com1} and \cref{fig:append_com2}, our method successfully reconstructs and animates the subject using only a single-view image as input. The results demonstrate that our approach effectively handles challenging articulation scenarios, such as high leg raises, while generating plausible non-rigid dynamics for loose clothing (e.g., skirts). We note that some systematic spatial misalignment between our rendering and the ground truth is observed; this is attributable to inaccuracies in the underlying SMPL parameters estimated from the raw video data, rather than a limitation of the generation pipeline itself. Despite this, the method maintains strong identity preservation and temporal consistency.

\subsection{Additional Qualitative Results}
In \cref{fig:append_g1}, we present renderings using dynamic 360-degree camera trajectories across various subjects and complex motions.
These results demonstrate that our framework generalizes effectively to diverse identities and actions, maintaining high visual fidelity and temporal consistency from all viewing angles.

\subsection{Limitations}
Although our framework achieves high-quality results in 3D human animation, it is subject to the inherent limitations of the underlying representation.
Specifically, as we rely on 4D Gaussian Splatting (4DGS), the reconstruction is not strictly lossless. While the method achieves high quantitative metrics (PSNR $\approx$ 35 dB), the discrete nature of the primitives may still result in minor smoothing of extremely high-frequency texture details compared to the source video.

\end{document}